\newcommand{\lie}{\text{!`}}
\newcommand{\announce}{\text{!}}
\newcommand{\bluff}{\text{!`}!}
\newtheorem{theorem}{Theorem}
\newtheorem{definition}[theorem]{Definition}
\newtheorem{proposition}[theorem]{Proposition}
\newcommand{\scep}{{\mathsf{sc}}}
\newcommand{\plau}{{\mathsf{pl}}}
\renewcommand{\Group}{A'}
\begin{document}

\title{The Ditmarsch Tale of Wonders --- The Dynamics of Lying}
\author{Hans van Ditmarsch, University of Seville, {\tt hvd@us.es}}
\date{\today}
\date{}

\maketitle

\begin{abstract}
We propose a dynamic logic of lying, wherein a `lie that $\phi$' (where $\phi$ is a formula in the logic) is an action in the sense of dynamic modal logic, that is interpreted as a state transformer relative to the formula $\phi$. The states that are being transformed are pointed Kripke models encoding the uncertainty of agents about their beliefs. Lies can be about factual propositions but also about modal formulas, such as the beliefs of other agents or the belief consequences of the lies of other agents. We distinguish (i) an outside observer who is lying to an agent that is modelled in the system, from (ii) one agent who is lying to another agent, and where both are modelled in the system. For either case, we further distinguish (iii) the agent who believes everything that it is told (even at the price of inconsistency), from (iv) the agent who only believes what it is told if that is consistent with its current beliefs, and from (v) the agent who believes everything that it is told by consistently revising its current beliefs. The logics have complete axiomatizations, which can most elegantly be shown by way of their embedding in what is known as action model logic or the extension of that logic to belief revision.
\end{abstract}

\section{Introduction} \label{sec.intro}

My favourite of Grimm's fairytales is `Hans im Gl\"uck' (Hans in Luck). A close second comes `The Ditmarsch Tale of Wonders'. In German this is called a `L\"ugenm\"archen', a `Liar's Tale'. It is as follows.

\begin{quote}
{\em 

  I will tell you something. I saw two roasted fowls flying; they flew
  quickly and had their breasts turned to Heaven and their backs to
  Hell; and an anvil and a mill-stone swam across the Rhine prettily,
  slowly, and gently; and a frog sat on the ice at Whitsuntide and ate
  a ploughshare.

  \medskip 

  Four fellows who wanted to catch a hare, went on crutches and
  stilts; one of them was deaf, the second blind, the third dumb, and
  the fourth could not stir a step. Do you want to know how it was
  done? First, the blind man saw the hare running across the field,
  the dumb one called to the deaf one, and the lame one seized it by
  the neck.

  \medskip 

  There were certain men who wished to sail on dry land, and they set
  their sails in the wind, and sailed away over great fields. Then
  they sailed over a high mountain, and there they were miserably
  drowned.

  \medskip 

  A crab was chasing a hare which was running away at full speed; and
  high up on the roof lay a cow which had climbed up there. In that
  country the flies are as big as the goats are here.

  \medskip 

  Open the window that the lies may fly out. \cite{grimm:1814}


}
\end{quote}

A passage like ``A crab was chasing a hare which was running away at full speed; and high up on the roof lay a cow which had climbed up there.''  contains very obvious lies. Nobody considers it possible that this is true. Crabs are reputedly slow, hares are reputedly fast.

\begin{quote} {\em In `The Ditmarsch Tale of Wonders', none of the lies are believed.} \end{quote}

In the movie `The Invention of Lying'\footnote{\url{http://en.wikipedia.org/wiki/The_Invention_of_Lying}} the main character Mark goes to a bank counter and finds out he only has \$300 in his account. But he needs \$800. Lying has not yet been invented in the 20th-century fairytale country of this movie --- that however seems to be in a universe very parellel to either the British Midlands or Brooklyn New York. Then and there, on the spot, Mark invents lying. We see some close-ups of Mark's braincells doing heavy duty---such a thing has not happened before. And then, Mark tells the bank employee assisting him that there must be a mistake: he has \$800 in his account. He is lying. She responds, oh well, then there must be a mistake with your account data, because on my screen it says you only have \$300. I'll inform system maintenance of the error. My apologies for the inconvenience. And she gives him \$800! In the remainder of the movie, Mark gets very rich.

Mark's lies are not as unbelievable as those in Grimm's fairytale. It is possible that he has \$800. It is just not true. Still, there is something unrealistic about the lies in this movie: new information is believed instantly. New information is even believed if it is inconsistent with prior information. After Mark's invention of lying while obtaining \$800, he's trying out his invention on many other people. It works all the time! There are shots wherein he first announces a fact, then its negation, then the fact again, while all the time his extremely credulous listeners keep believing every last announcement. New information is also believed if it contradicts direct observation. In a caf\'e, in company of several of his friends, he claims to be a one-armed bandit. And they commiserate with him, oh, I never knew you only had one arm, how terrible for you. All the time, Mark is sitting there drinking beer and gesturing with both hands while telling his story.

\begin{quote} {\em In the movie `The Invention of Lying', all lies are believed.} \end{quote}

In the real world, if you lie, sometimes other people believe you and sometimes they don't. When can you get away with a lie? Consider the consecutive numbers riddle \cite{littlewood:1953}. \begin{quote} {\em Anne and Bill are each going to be told a natural number. Their numbers will be one apart. The numbers are now being whispered in their respective ears. They are aware of this scenario. Suppose Anne is told 2 and Bill is told 3.

The following truthful conversation between Anne and Bill now takes place:
\begin{itemize}
\item Anne: ``I do not know your number.''
\item Bill: ``I do not know your number.''
\item Anne: ``I know your number.''
\item Bill: ``I know your number.''
\end{itemize}
Explain why is this possible.
} \end{quote} 

\noindent Initially, Anne is uncertain between Bill having 1 or 3, and Bill is uncertain between Anne having 2 or 4. So both Anne and Bill do not initially know their number. 

Suppose that Anne first says to Bill: ``I know your number.'' Anne is lying. Bill does not consider it possible that Anne knows his number. However, Anne did not know that Bill would not believe her. She considered it possible that Bill had 1, in which case Bill would have considered it possible that Anne was telling the truth, and would then have drawn the incorrect conclusion that Anne had 0.

Alternatively, suppose that the first announcement, Anne's, is truthful, but that Bill is lying in the second announcement and says to Anne: ``I know your number.'' If Anne believes that, she will then say ``I know your number,'' as she believes Bill to have 1. Her announcement is an honest mistake, because this belief is incorrect. However, as a result of Anne's announcement Bill will learn Anne's number, so that his announcement ``I know your number,'' that was a lie at the time, now has become true. 

That is, if you are still following us.

It seems not so clear how all this should be formalized in a logic interpreted on epistemic modal structures, and this is the topic of our paper.

\begin{quote} {\em In real life, some lies are believed and some are not.} \end{quote}

\subsection{The modal dynamics of lying}

What is a lie? Let $\atom$ be a proposition. You lie to me that $\atom$, if you believe that $\atom$ is false while you say that $\atom$, and with the intention that I believe $\atom$. The thing you say we call the announcement. If you succeed in your intention, I believe $\atom$, and I also believe that your announcement of $\atom$ was truthful, i.e., that you believed that $\atom$ when you said that $\atom$. In this investigation we abstract from the intentional aspect of lying. Such an abstraction seems reasonable. It is similar to that in AGM belief revision \cite{AGM:1985}, wherein one models how to incorporate new information in an agent's belief set, but abstracts from the process that made the new information acceptable to the agent. Our proposal is firmly grounded in modal logic. We employ dynamic epistemic logic \cite{hvdetal.del:2007}.

\paragraph*{Preconditions}
What are the modal preconditions and postconditions of a lie? Let us for now assume that $\atom$ itself is not a modal proposition but a Boolean proposition. We further assume two agents, $\agent$ and $\agentb$. Agent $\agent$ will be assumed female and agent $\agentb$ will be assumed male. Typically, in our exposition $\agent$ will be the speaker or sender and $\agentb$ will be the receiver or addressee. However, $\agent$ and $\agentb$ are not agent roles but agent names (and agent variables). We also model dialogue wherein agents speak in turn; so these roles may swap. Formula $B_\agent \atom$ stand for `agent $\agent$ believes that $\atom$'. We use belief modalities $B$ and not knowledge modalities $K$, because lying results in false beliefs, whereas knowledge modalities are used for correct beliefs. 

The precondition of `$\agent$ is lying that $\atom$ to $\agentb$' is $B_\agent \neg \atom$ ($\neg$ is negation). Stronger preconditions are conceivable, e.g., that the addressee considers it possible that the lie is true, $\neg B_\agentb \neg \atom$, or that the speaker believes that, $B_\agent \neg B_\agentb \neg \atom$. These conditions may not always hold while we still call the announcement a lie, because the speaker may not know whether the additional conditions are satisfied. We therefore will only require precondition $B_\agent \neg \atom$. 

We should contrast the announcement that $\atom$ by a lying agent with other forms of announcement. Just as a lying agent believes that $\atom$ is false when it announces $\atom$, a truthful agent believes that $\atom$ is true when it announces $\atom$. The precondition for a lying announcement by $\agent$ is $B_\agent \neg \atom$, and so the precondition for a truthful announcement by $\agent$ is $B_\agent \atom$. Here, were should put up some strong terminological barriers in order to avoid pitfalls and digressions into philosophy and epistemology. Truthful is synonymous with honest. Now dictionaries, that report actual usage, do not make a difference between an agent $\agent$ telling the truth and an agent $\agent$ believing that she is telling the truth. A modal logician has to make a choice. We mean the latter, exclusively. A truthful announcement may therefore not be a true announcement. If $\atom$ is false but agent $\agent$ mistakenly believes that $\atom$, then when she says $\atom$, that is a truthful but false announcement of $\atom$. Besides the truthful and the lying announcement there is yet another form of announcement, because in modal logic there are always three instead of two possibilities: either you believe $\atom$, or you believe $\neg \atom$, or you are uncertain whether $\atom$. The last corresponds to the precondition $\neg (B_\agent \atom \vel B_\agent \neg \atom)$ ($\vel$ is disjunction). An announcement wherein agent $\agent$ announces $\atom$ while she is uncertain about $\atom$ we propose to call a bluffing announcement. The dictionary meaning for the verb bluff is `to cause to believe what is untrue' or `to deceive or to feign' in a more general sense. Its meaning is even more intentional than that of lying. Feigning belief in $\atom$ means suggesting belief in $\atom$, by saying it, or otherwise behaving in accordance to it, although you do not have this belief. This corresponds to $\neg B_\agent \atom$ as precondition. This would make lying a form of bluffing, as $B_\agent \neg \atom$ implies $\neg B_\agent \atom$. It is common and according to Gricean conversational norms to consider that saying something that you believe to be false is worse than (or, at least, different from) saying something that you do not believe to be true. This brings us to $\neg B_\agent \atom \et \neg B_\agent \neg \atom$ ($\et$ is conjunction), equivalent to $\neg (B_\agent \atom \vel B_\agent \neg \atom)$. 

To the three mutually exclusive (and complete) preconditions $B_\agent \atom$, $B_\agent \neg\atom$, and $\neg(B_\agent \atom \vel B_\agent \neg\atom)$ we associate the truthful, lying, and bluffing announcement that $\atom$, and we call the announcing agent a truthteller, liar and bluffer, respectively.\footnote{Here again, it is stretching usage that a truthteller may not be telling the truth but only what she believes to be the truth, but that cannot be helped.} The three forms of announcement are intricately intertwined. This is obvious: to a credulous addressee a lying announcement appears to be a truthful announcement, whereas a skeptical addressee, who already believed the opposite of the announcement, has to make up his mind whether the speaker is merely mistaken, or is bluffing, or is even lying.

\paragraph*{Postconditions}
We now consider the postconditions of `$\agent$ is lying that $\atom$ to $\agentb$'. If $\agent$'s intention to deceive $\agentb$ is successful, $\agentb$ believes $\atom$ after the lie. Therefore, $B_\agentb \atom$ should be a postcondition of a successful execution of the action of lying. Also, the precondition should be preserved: $B_\agent \neg \atom$ should still true after the lie. In the first place, we propose logics to achieve this. However, this comes at a price. In case the agent $\agentb$ already believed the opposite, $B_\agentb \neg\atom$, then $\agentb$'s beliefs are inconsistent afterwards. (This merely means that $\agentb$'s accessibility relation is empty, not that the logic is inconsistent.) There are two different solutions for this: either $\agentb$ does not change his beliefs, so that $B_\agentb \neg\atom$ still holds after the lie, or the belief $B_\agentb \neg\atom$ is given up in order to consistently incorporate $B_\agentb \atom$. The three alternative postconditions after the lie that $\atom$ are therefore: (i) always make $B_\agentb \atom$ true after the lie (even at the price of inconsistency), (ii) only make $B_\agentb \atom$ true if agent $\agentb$ considered $\atom$ possible before the lie ($\neg B_\agentb \neg \atom$), and (iii) always make $B_\agentb \atom$ true by a consistency preserving process of belief revision. These are all modelled.

\paragraph*{Lying as a dynamic modal operator}
The preconditions and postconditions of lying may contain epistemic modal operators (belief modalities). The action of lying itself is modelled as a dynamic modal operator. The dynamic modal operator for `lying that $\atom$' is interpreted as an epistemic state transformer. An epistemic state is a pointed Kripke model (a model with a designated state) that encodes the beliefs of the agents. An epistemic action `agent $\agent$ lies that $\atom$ to agent $\agentb$' should transform an epistemic state satisfying $B_\agent \neg \atom$ into an epistemic state satisfying $B_\agentb \atom$ and $B_\agent \neg \atom$. The execution of such dynamic modal operators for epistemic actions depends on the initial epistemic state and that operator's description only. In that sense, they are different from dynamic modal operators for (PDL-style) actions that are interpreted using an accessibility relation in a given structure.

In this dynamic epistemic setting we can distinguish (i) the case of an external observer (an agent who is not explicitly modelled in the structures and in the logical language), who is lying to an agent modelled in the system, from (ii) the case of one agent lying to another agent, where both are explicitly modelled. For this external agent a truthful announcement is the same as a true announcement, and a lying announcement is the same as a false announcement.\footnote{This explains the terminological confusion in the area: the logic known as that of truthful public announcements is really the logic of {\em true} public announcements.} These matters will be addressed in detail.

In dynamic epistemic logics the transmission of messages is instantaneous and infallible. This is another assumption in our modelling framework.

\paragraph*{Lying about modal formulas}
The belief operators $B_\agent$ do not merely apply to Boolean propositions $\atom$ but to any proposition $\phi$ with belief modalities. This is known as higher-order belief. In the semantics, the generalization from `lying that $\atom$' to `lying that $\phi$' for any proposition, does not present any problem. This is, because it will be defined relative to the set of states where the formula is believed by the speaker $\agent$. For a subset of the domain, it does not matter if it was determined for a Boolean formula or for a modal formula. Still, there are other problems. 

Firstly, we aim for agents having what is known as `normal' beliefs, that satisfy consistency and introspection: $B_\agent \phi \imp \neg B_\agent \neg \phi$, $B_\agent \phi \imp B_\agent B_\agent \phi$, and $\neg B_\agent \phi \imp B_\agent \neg B_\agent \phi$. If we wish the addressee $\agentb$ to believe the lie that $\atom$ even if he already believed $\neg \atom$, his beliefs would become inconsistent. The property $B_\agent \phi \imp \neg B_\agent \neg \phi$ of consistent belief is therefore not preserved under lying updates. We will address this. 

Secondly, we may well require that $B_\agent \neg \atom$ is true before the lie by $\agent$ that $\atom$ and that $B_\agent \neg \atom$ and $B_\agentb \atom$ are true after the lie that $\atom$, but we cannot and do not even want to require for any $\phi$ that, if $B_\agent \neg \phi$ is true before the lie by $\agent$ that $\phi$, then $B_\agent \neg \phi$ and $B_\agentb \phi$ are true afterwards. For a typical example, suppose that $\atom$ is false and that $\agent$ lies that $\atom\et\neg B_\agentb \atom$ to $\agentb$.\footnote{It is not raining in Sevilla. You don't know that. I am lying to you: ``You don't know that it is raining in Sevilla!'' I.e., using the conventional conversational implicature, ``It is raining in Sevilla but/and you do not know that.'' This is a Moorean sentence.} We would say that the lie was successful if $B_\agentb \atom$ holds, not if $B_\agentb (\atom\et \neg B_\agentb \atom)$ holds, an inconsistency (for belief). Also, we do not want that $B_\agent \neg(\atom\et \neg B_\agentb \atom)$, that was true before the lie, remains true after the lie. It is common in the area to stick to the chosen semantic operation but not to require persistence of belief in such cases. 

Our proposed modelling, that also applies to modal formulas, allows us to elegantly explain why in the consecutive number riddle we can have that (see above) `{\em as a result of Anne's announcement Bill will learn Anne's number, so that his prior announcement ``I know your number,'' that was a lie at the time, now has become true.}' The seemingly contradictory announcements in the riddle are Moorean phenomenona, and the added aspect of uncertainty about lying or truthtelling makes the analysis more complex, and the results more interesting.

\paragraph*{More modalities} In the concluding Section \ref{sec.further} we discuss further issues in the modal logic of lying, such as group epistemic operators (common belief) in preconditions and postconditions of lying, and structures with histories of actions to keep track of the number of past lies.

\subsection{A short history of lying}

We conclude this introduction with an review of literature on lying. 

\paragraph*{Philosophy}
Lying has been a thriving topic in the philosophical community for a long, long time \cite{siegler:1966,bok:1978,mahon:2006,mahon.stanford:2008}. Almost any analysis starts with quoting Augustine on lying: \begin{quote} ``that man lies, who has one thing in his mind and utters another in words'' \\ ``the fault of him who lies, is the desire of deceiving in the uttering of his mind'' \cite{Augustine:dm} \end{quote} In other words: saying that $\atom$ while believing that $\neg\atom$, with the intention to make believe $\atom$, our starting assumption. The requirements for the belief preconditions and postconditions in such works are illuminating \cite{mahon.stanford:2008}. For example, the addressee should not merely believe the lie but believe it to be believed by the speaker. Indeed, ... and even believed to be commonly believed, would the modal logician say (see the final Section \ref{sec.further}). Scenarios involving eavesdroppers (can you lie to an agent who is not the addressee?) are relevant for logic and multi-agent system design, and also claims that you can only lie if you really say something: an omission is not a lie \cite{mahon.stanford:2008}. Wrong, says the computer scientist: if the protocol is common knowledge, you can lie by not acting when you should have; say, by not stepping forward in the muddy children problem, although you know that you are muddy. The philosophical literature also clearly distinguishes between false propositions and propositions believed to be false but in fact true, so that when you lie about them, in fact you tell the truth. Gettier-like scenarios are presented, including delayed justification \cite{rott:2003a}.\footnote{Suppose that you believe that $\neg \atom$ and that you lie that $\atom$. Later find out that your belief was mistaken because $\atom$ was really true. You can then with some justification say ``Ah, so I was not really lying.''} Much is said on the morality of lying \cite{bok:1978} and on its intentional aspect. As said, we abstract from the intentional aspect of lying. We also abstract from its moral aspect.

\paragraph*{Psychology}
Lying excites great interest in the general public. Lots of popular science books are written on the topic, typical examples are \cite{trivers:2011,velden:2011}. In psychology, biology, and other experimental sciences lying and deception are related. A cuckoo is `lying' if it is laying eggs in another bird's nest. Two issues are relevant for our investigation. Firstly, that it is typical to be believed, and that lying is therefore the exception. We model the `successful' lie that is indeed believed, unless there is evidence to the contrary: prior belief in the opposite. Secondly, that the detection of lying is costly, and that this is a reason to be typically believed. In logic, cost is computational complexity. The issue of the complexity of lying is shortly addressed in Section \ref{sec.further} on further research.

\paragraph*{Economics}
In economics, `cheap talk' is making false promises. Your talk is cheap if you do not intend to execute an action that you publicly announced to plan. It is therefore a lie, it is deception \cite{gneezy:2005,kartiketal:2007}. Our focus is different. We do not model lying about planned actions but lying about propositions, and in particular on their belief consequences. Economists postulate probabilities for lying strategies and truthful strategies, to be tested experimentally. We only distinguish lies that are always believed from lies that (in the face of contradictory prior belief) are never believed.

\paragraph*{Logic}
Papers that model lying as an epistemic action, inducing a transformation of an epistemic model, include \cite{baltag:2002,steiner:2006,baltagetal.tlg:2008,hvd.comments:2008,kooietal:2011,hvdetal.lying:2011}. Lying by an external observer has been discussed by Baltag and collaborators from the inception of dynamic epistemic logic onward \cite{baltag:2002}; the later \cite{baltagetal.tlg:2008} also discusses lying in logics with plausible belief, as does \cite{hvd.comments:2008}. In \cite{hvdetal.lying:2011} the conscious update in \cite{gerbrandyetal:1997} is applied to model lying by an external observer. In \cite{sakamaetal:2010} the authors give a modal logic of lying and bluffing, including intentions. Instead of bluffing they call this bullshit, after \cite{frankfurt:2005}. Strangely, in view of contradictory Moorean phenomena, they do not model lying as a dynamic modality. In \cite{steiner:2006,kooietal:2011} the unbelievable update is considered; this is the issue consistency preservation for belief, as in our treatment of unbelievable lies (rejecting the lie that $\atom$ if you already believe $\neg\atom$). The promising manuscript \cite{liuetal:2012} allows explicit reference in the logical language to truthful, lying and bluffing agents (the authors call this `agent types'), thus enabling some form of self-reference.

\paragraph*{Artificial intelligence?}
Various of the already cited works could have been put under this header. But then it would not have been a question mark. Applications of epistemic logic in artificial intelligence typically are about knowledge and not about belief. Successful frameworks as interpreted systems and knowledge programs \cite{faginetal:1995} model multi-${\mathcal S5}$ systems. Our analysis of the modal dynamics of lying aims to prepare the ground for AI applications involving belief instead of knowledge, and to facilitate determining the complexities of such reasoning tasks.\footnote{Argumentation theory, when seen as an area in AI, does of course model beliefs and their justificiations.}

\subsection{Contributions and overview}

A main and novel contribution of our paper is a precise model of the informative consequences of two agents lying to each other, and a logic for that, including a treatment of bluffing. This agent-to-agent-lying, in the logic called agent announcement logic, is presented in Section \ref{sec.agent}, with an extended example in Section \ref{sec.example}. A special, simpler, case is that of an outside observer who is lying to an agent that is modelled in the system. This (truthful and lying) public announcement logic is treated in Section \ref{sec.pub}. That section mainly contains results from \cite{hvdetal.lying:2011}. Section \ref{sec.prelim} introduces the standard truthful (without lying) public announcement logic of which our proposals can be seen as variations. Section \ref{sec.am} on action models is an alternative perspective on the frameworks presented in Section \ref{sec.pub} and Section \ref{sec.agent}. It anchors them in another part of dynamic epistemic logic. Section \ref{sec.unbel} contains another novel contribution. It adapts the logics of the Sections \ref{sec.pub} and \ref{sec.agent} to the requirement that unbelievable lies (if you hear $\atom$ but already believe $\neg\atom$) should not be incorporated. Subsequently, Section \ref{sec.plaus} adapts these logics to the requirement that unbelievable lies, on the contrary, should be incorporated, but consistently so. This can be anchored in yet another part of the dynamic epistemic logical literature, involving structures with plausibility relations. All these logics have complete axiomatizations (which is unremarkable). An incidental novel contribution is how to resolve ambiguity between bluffing and lying with disjunctive normal forms (Proposition \ref{prop.adf}). 
Section \ref{sec.further} finishes the paper with considerations on the limitations of our approach and further research.

\section{Truthful public announcements} \label{sec.prelim}

The well-known logic of truthful public announcements \cite{plaza:1989,baltagetal:1998} is an extension of multi-agent epistemic logic. Its language, structures, and semantics are as follows. Given are a finite set of agents $\Agents$ and a countable set of propositional variables $\Atoms$. 
\begin{definition}
The {\em language of truthful public announcement logic} is inductively defined as \[  \lang(!) \ \ni \  \phi ::= \atom \ | \ \neg \phi \ | \ (\phi \et \psi) \ | \ B_\agent \phi \ | \ [!\phi]\psi \] where $\atom \in \Atoms$, and $\agent \in \Agents$.\footnote{Unlike in the introduction, $\atom$ is a Boolean/propositional variable, and not any Boolean proposition.} Without the announcement operators we get the language $\lang$ of epistemic logic. \end{definition} Other propositional connectives are defined by abbreviation. For $B_\agent \phi$, read `agent $\agent$ believes formula $\phi$'. Agent variables are $\agent,\agentb,\agentc,\dots$. For $[!\phi] \psi$, read `after truthful public announcement of $\phi$, formula $\psi$ (is true)'. The dual operator for the necessity-type announcement operator is by abbreviation defined as $\dia{!\phi} \psi := \neg [!\phi] \neg \psi$. If $B_\agent \neg\phi$ we say that $\phi$ is {\em unbelievable} and, consequently, if $\neg B_\agent \neg\phi$ we say that $\phi$ is {\em believable}. This is also read as `agent $\agent$ considers it possible that $\phi$'. 

\begin{definition}
An {\em epistemic model} $M = ( \States, R, V )$ consists of a {\em domain} $\States$ of {\em states} (or `worlds'), an {\em accessibility function} $R: \Agents \imp {\mathcal P}(\States \times \States)$, where each $R(\agent)$, for which we write $R_\agent$, is an accessibility relation, and a {\em valuation} $V: \Atoms \imp {\mathcal P}(\States)$, where each $V(\atom)$ represents the set of states where $\atom$ is true. For $\state \in \States$, $(M,\state)$ is an {\em epistemic state}. \end{definition} An epistemic state is also known as a pointed Kripke model. We often omit the parentheses. Four model classes will appear in this work. Without any restrictions we call the model class ${\mathcal K}$. The class of models where all accessibility relations are transitive and euclidean is called ${\mathcal K45}$, and if they are also serial it is called ${\mathcal KD45}$. The class of models where all accessibility relations are equivalence relations is ${\mathcal S5}$. Class ${\mathcal KD45}$ is said to have the {\em properties of belief}, and ${\mathcal S5}$ to have the {\em properties of knowledge}.
\begin{definition}
Assume an epistemic model $M = ( \States, R, V )$.  
\[ \begin{array}{lcl}
M,\state \models \atom &\mbox{iff} & \state \in V_\atom \\ 
M,\state \models \neg \phi &\mbox{iff} & M,\state \not \models \phi \\ 
M,\state \models \phi \et \psi &\mbox{iff} & M,\state \models \phi  \text{ and } M,\state \models \psi \\  
M,\state \models B_\agent \phi &\mbox{iff} & \mbox{for all \ }  \stateb \in \States: R_\agent(\state,\stateb) \text{ implies } M,\stateb  \models \phi \\  
M,\state \models [!\phi] \psi &\mbox{iff} & M,\state \models \phi  \text{ implies } M|\phi,\state \models \psi  \end{array} \] where the model restriction $M|\phi = ( \States', R', V'  )$ is defined as $\States' = \{ \state' \in \States \suchthat M,\state' \models \phi \}$ (= $\II{\phi}_M$), $R'_\agent = R_\agent \inter \ (\States' \times \States')$ and $V'(\atom) = V(\atom) \inter \States'$. \end{definition} A complete proof system for this logic for class ${\mathcal S5}$ is presented in \cite{plaza:1989}. Trivial variations are complete axiomatizations for the model classes ${\mathcal K}$ and ${\mathcal K45}$. The interaction axiom between announcement and belief is: 
\begin{definition} \label{def.annbelief} \[ \begin{array}{lcl}
\mbox{} [\announce \phi] B_\agent \psi & \leftrightarrow &  \phi \rightarrow B_\agent [\announce \phi] \psi 
\end{array} \] \end{definition}
The interaction between announcement and other operators we assume known \cite{hvdetal.del:2007}. It changes predictably in the other logics we present. In the coming sections, we will only vary the dynamic part of the logic, and focus on that completely.

For an example of the semantics of public announcement, consider a situation wherein an agent $\agentb$ is uncertain about $\atom$, and receives the information that $\atom$. The initial uncertainty requires a model consisting of two states, one where $\atom$ is true and one where $\atom$ is false. In view of the continuation, we draw all accessibility relations. For convenience, a state has been given the value of the atom true there as its name. The actual state is underlined.

\bigskip

\psset{border=2pt, nodesep=4pt, radius=2pt, tnpos=a}
\pspicture(-1,0)(2,0)
$
\rput(0,0){\rnode{00}{\neg\atom}}
\rput(2,0){\rnode{10}{\underline{\atom}}}
\ncline{<->}{00}{10} \ncput*{\agentb}
\nccircle[angle=90]{->}{00}{.5} \ncput*{\agentb}
\nccircle[angle=270]{->}{10}{.5} \ncput*{\agentb}
$
\endpspicture
\hspace{2cm} {\Large $\stackrel {! \atom} \Imp$} \  
\psset{border=2pt, nodesep=4pt, radius=2pt, tnpos=a}
\pspicture(-1,0)(2,0)
$
\rput(2,0){\rnode{10}{\underline{\atom}}}
\nccircle[angle=270]{->}{10}{.5} \ncput*{\agentb}
$
\endpspicture
\bigskip
\bigskip

\noindent 
In the actual state $\atom$ is true, and from the actual state two states are accessible: that $\atom$-state and the $\neg\atom$-state. Therefore, the agent does not believe $\atom$ (as there is an accessible $\neg\atom$-state) and does not believe $\neg\atom$ either (as there is an accessible $\atom$-state). She is uncertain whether $\atom$. The announcement $!\atom$ results in a restriction of the epistemic state to the $\atom$-state (it is false in the other state). On the right, the agent believes that $\atom$. In the initial epistemic state it is therefore true that $\atom \et \neg (B_\agentb \atom \vel B_\agentb \neg \atom) \et [!\atom] B_\agentb \atom$. Note that both on the left and on the right the accessibility relation is an equivalence relation. Indeed, the class ${\mathcal S5}$ is closed under truthful public announcements. The example models change of knowledge rather than the weaker change of belief.

The class ${\mathcal KD45}$ is not closed under truthful public announcements. Let us consider another example. Suppose agent $\agent$ incorrectly believes $\atom$ and wishes to process the truthful public announcement that $\neg\atom$.

\bigskip

\psset{border=2pt, nodesep=4pt, radius=2pt, tnpos=a}
\pspicture(-1,0)(2,0)
$
\rput(0,0){\rnode{00}{\underline{\neg\atom}}}
\rput(2,0){\rnode{10}{\atom}}
\ncline{->}{00}{10} \ncput*{\agent}
\nccircle[angle=270]{->}{10}{.5} \ncput*{\agent}
$
\endpspicture
\hspace{2cm} {\Large $\stackrel {! \neg \atom} \Imp$} \  
\psset{border=2pt, nodesep=4pt, radius=2pt, tnpos=a}
\pspicture(-1,0)(2,0)
$
\rput(0,0){\rnode{00}{\underline{\neg\atom}}}
$
\endpspicture

\bigskip
\bigskip

\noindent On the left, we have that $\neg \atom \et B_\agent \atom$ is true. The new information $!\neg \atom$ results in eliminating the $\atom$-state, and consequently agent $\agent$'s accessibility relation becomes empty. She believes everything. The model on the left is ${\mathcal KD45}$, but the model on the right is not ${\mathcal KD45}$, it is not serial. However, the class ${\mathcal K45}$ is closed under truthful public announcements.

\section{Logic of truthful and lying public announcements} \label{sec.pub}

The logic of lying public announcements complements the logic of truthful public announcements. They are inseparably tied to one another. For a clear link we need to use an alternative semantics for truthful public announcement logic. The results in this section are mainly from \cite{hvdetal.lying:2011}.

We expand the language of truthful public announcement logic with another inductive construct $[\lie\phi]\psi$, for `after lying public announcement of $\phi$, formula $\psi$ (is true)'; in short `after the lie that $\phi$, $\psi$'. This is the language $\lang(!,\lie)$.
\begin{definition}[Language] \[ \lang(!,\lie) \ \ni \ \phi ::= \atom \ | \ \neg \phi \ | \ (\phi \et \psi) \ | \ B_\agent \phi \ | \ [!\phi]\psi \ | \ [\lie\phi]\psi \] \end{definition}
Truthful public announcement logic is the logic to model the revelations of a benevolent god, taken as the truth without questioning. The announcing agent is not modelled in public announcement logic, but only the effect of its announcements on the audience, the set of all agents. Consider a {\em false} public announcement, made by a malevolent entity, the devil. Everything he says is false. Everything is a lie. As in religion, god and the devil are inseparable and should be modelled simultaneously.

As a semantics for this logic we employ an alternative to the semantics for public announcement logic from the previous section. This alternative is the semantics of conscious updates \cite{gerbrandyetal:1997}.\footnote{A public announcement of $\phi$ is a conscious update with the test $?\phi$ for the entire set of agents $\Agents$ (such updates can be for any subset of agents). In \cite{gerbrandyetal:1997} these updates are interpreted on non-wellfounded sets, namely on rooted infinite trees, such as the tree unwinding of a pointed $S5$ model. We present the simpler semantics for conscious update of \cite{kooi.jancl:2007}. We note that \cite{gerbrandyetal:1997} and \cite{plaza:1989} were independent proposals for the semantics of public announcements.} When announcing $\phi$, instead of eliminating states where $\phi$ does not hold, one eliminates for each agent those pairs of its accessibility relation where $\phi$ does not hold in the second argument of the pair. The effect of the announcement of $\phi$ is that only states where $\phi$ is true are accessible for the agents. In other words, the semantics is arrow eliminating instead of state eliminating. We call this the {\em believed public announcement}. New information is accepted by the agents independent from the truth of that information.
\begin{definition}[Semantics of believed public announcement] \label{def.belpubann}
\[ \begin{array}{lcl}
M,\state \models [\text{\it believed public announcement that }\phi] \psi &\mbox{iff} & M^\phi,\state \models \psi \end{array} \] where epistemic model $M^\phi$ is as $M$ except that (with $\States$ the domain of $M$) \[ R^\phi_\agent \ := \ R_\agent \inter \ (\States \times \II{\phi}_M) . \] 
\end{definition}
In \cite{hvdetal.lying:2011}, the believed public announcement of $\phi$ is called manipulative update with $\phi$. The original proposal there is to view believed public announcement of $\phi$ as non-deterministic choice (as in action logics and PDL-style logics) between truthful public announcement of $\phi$ and lying public announcement of $\phi$.
\begin{definition}[Semantics of truthful and lying public announcement] \label{def.truthlyingpub}
\[ \begin{array}{lcl}
M,\state \models [!\phi] \psi &\mbox{iff} & M,\state \models \phi  \text{ implies } M^\phi,\state \models \psi  \\
M,\state \models [\lie\phi] \psi &\mbox{iff} & M,\state \models \neg\phi  \text{ implies } M^\phi,\state \models \psi 
\end{array} \] \end{definition}
If we now define $[!\phi\lie]\psi$ by abbreviation as $[!\phi]\psi \et [\lie\phi]\psi$, then $!\phi\lie$ has the semantics of `believed public announcement that $\phi$'. The non-determinism of this operator (it has two executions, one when $\phi$ is true and one when $\phi$ is false) comes to the fore if we write it in the dual form: $\dia{!\phi\lie}\psi \eq \dia{!\phi}\psi \vel \dia{\lie\phi}\psi$; in other words, we can view $!\phi\lie$ as some $!\phi\union\lie\phi$, where $\union$ is non-deterministic choice.

The following result justifies that it is not ambiguous to write $!\phi$ for `arrow elimination' truthful public announcement but also for `state elimination' truthful public announcement. \begin{proposition}[\cite{kooi.jancl:2007,hvdetal.lying:2011}] \label{prop.statearrow} Let $\state\in\Domain(M)$ and $M,\state\models\phi$. Then \[ (M^\phi,\state) \bisim (M|\phi, \state) . \] \end{proposition}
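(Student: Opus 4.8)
The plan is to exhibit an explicit bisimulation between the two pointed models, and the natural candidate is the ``diagonal'' on the set of $\phi$-states. The key structural observation is that the arrow-eliminating update only retains pairs whose second coordinate lies in $\II{\phi}_M$, so in $M^\phi$ every surviving $R^\phi_\agent$-arrow points into $\II{\phi}_M$; consequently no $\neg\phi$-state is reachable from $\state$ (which is a $\phi$-state, since $M,\state\models\phi$). The state-eliminating restriction $M|\phi$ simply discards those unreachable $\neg\phi$-states outright. On the common set $\II{\phi}_M$ the two models carry the same valuation and, as I will check, the same accessibility, which is what makes the diagonal a bisimulation.

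Concretely, I would set $Z = \{(\stateb,\stateb) \suchthat \stateb \in \II{\phi}_M\}$, viewed as a relation between $\Domain(M^\phi)=\States$ and $\Domain(M|\phi)=\II{\phi}_M$. Since $M,\state\models\phi$ gives $\state\in\II{\phi}_M$, the pair $(\state,\state)$ lies in $Z$, so it suffices to verify the bisimulation conditions. Atomic harmony is immediate: for $\stateb\in\II{\phi}_M$ the valuation of $M^\phi$ is $V$ and that of $M|\phi$ is $V$ intersected with $\II{\phi}_M$, and these agree on $\stateb$.

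For the forth clause, suppose $(\stateb,\stateb)\in Z$ and $R^\phi_\agent(\stateb,\stateb')$. By definition of $R^\phi_\agent = R_\agent \inter (\States\times\II{\phi}_M)$ we get $\stateb'\in\II{\phi}_M$ and $R_\agent(\stateb,\stateb')$; since also $\stateb\in\II{\phi}_M$, this yields $R'_\agent(\stateb,\stateb')$ in $M|\phi$, and $(\stateb',\stateb')\in Z$. For the back clause, suppose $(\stateb,\stateb)\in Z$ and $R'_\agent(\stateb,\stateb')$; then $R_\agent(\stateb,\stateb')$ with $\stateb'\in\II{\phi}_M$, whence $R^\phi_\agent(\stateb,\stateb')$, and again $(\stateb',\stateb')\in Z$. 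This discharges all clauses.

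The only point requiring care---rather than a genuine obstacle---is the bookkeeping caused by the two models having different domains, so that $Z$ is not the identity on one model but the diagonal across the $\phi$-states shared by both. The mathematical content is entirely contained in the single observation that arrow elimination turns every $\neg\phi$-state into a dead end unreachable from $\state$, exactly the states that state elimination removes; once that is noted, the back-and-forth verification is routine.
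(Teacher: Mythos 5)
Your proof is correct. The paper itself gives no proof of this proposition (it is imported by citation from earlier work), and your argument --- taking $Z$ to be the diagonal on $\II{\phi}_M$, noting that every $R^\phi_\agent$-arrow in $M^\phi$ lands inside $\II{\phi}_M$ so that the two models agree on accessibility and valuation over the $\phi$-states, and then verifying atomic harmony, forth, and back --- is exactly the standard witness for this bisimilarity; nothing is missing.
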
 The symbol $\bisim$ stands for `is bisimilar to', a well-known notion that guarantees that the models cannot be distinguished in the logical language \cite{blackburnetal:2001}.

State elimination seems simpler than arrow elimination. Having gone through the trouble of reinterpreting truthful public announcement in arrow elimination semantics, why did we not proceed in the other direction and reinterpret lying public announcement in state elimination semantics? This is not possible! Consider a model wherein all belief is correct, i.e., a model with equivalence relations encoding the beliefs of the agents. A state elimination semantics preserves equivalence, whereas lying inevitably introduces false beliefs: states not accessible to themselves.

The axiom for belief after truthful public announcement remains what it was (Definition \ref{def.annbelief}, and using Proposition \ref{prop.statearrow}) and the axiom for the reduction of belief after lying is as follows.
\begin{definition}[Axiom for belief after lying \cite{hvdetal.lying:2011}] \label{def.axiombellie}
\[ \begin{array}{lcl} 
\mbox{} [\lie \phi] B_\agent \psi &  \leftrightarrow & \neg \phi \rightarrow B_\agent [\announce \phi] \psi .
\end{array} \] \end{definition}
After the lying public announcement that $\phi$, agent $\agent$ believes that $\psi$, if and only if, on condition that $\phi$ is false, agent $\agent$ believes that $\psi$ after truthful public announcement that $\phi$. To the credulous person who believes the lie, the lie appears to be the truth.

\begin{proposition}[\cite{hvdetal.lying:2011}] \label{prop.publiclying}
The axiomatization of the logic of truthful and lying public announcements is complete (for the model class ${\mathcal K}$ and for the model class ${\mathcal K}45$).
\end{proposition}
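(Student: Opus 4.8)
The plan is to prove completeness by the standard \emph{reduction} (translation) method for dynamic epistemic logics. The key observation is that every dynamic axiom is a \emph{reduction axiom}: read left to right, each equivalence rewrites a formula with an outermost announcement operator into one in which that operator has been pushed strictly inward. The axiomatization consists of the complete base epistemic logic for the relevant class --- the axioms and rules for ${\mathcal K}$ (resp.\ ${\mathcal K}45$), i.e.\ propositional logic, modal distribution and necessitation for each $B_\agent$, plus the transitivity and euclideanness axioms in the ${\mathcal K}45$ case --- together with the reduction axioms for the two announcement operators: the atomic and Boolean reduction axioms (assumed known for $!$, and obtained by replacing the precondition $\phi$ with $\neg\phi$ for $\lie$), the belief axiom of Definition~\ref{def.annbelief} for $!$, and the belief axiom of Definition~\ref{def.axiombellie} for $\lie$. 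Soundness of each reduction axiom is read off Definition~\ref{def.truthlyingpub}, using Proposition~\ref{prop.statearrow} so that the arrow-elimination semantics agrees with reading $[!\phi]$ as state elimination inside a belief modality.

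I would first show that every $\phi \in \lang(!,\lie)$ is provably equivalent to an announcement-free formula $t(\phi) \in \lang$. Reading the reduction axioms as rewrite rules, this amounts to proving that the rewriting terminates. I would introduce a complexity measure $c$ on formulas with $c(\atom)=1$, $c(\neg\phi)=1+c(\phi)$, $c(\phi\et\psi)=1+\max(c(\phi),c(\psi))$, $c(B_\agent\phi)=1+c(\phi)$, the obvious clauses for the derived connectives, and a \emph{multiplicative} clause for the dynamic operators, $c([!\phi]\psi)=c([\lie\phi]\psi)=(c(\phi)+k)\cdot c(\psi)$ for a suitably large constant $k$. The multiplicative clause is exactly what forces distributing an announcement over a conjunction, or pushing it through a negation or a belief modality, to strictly lower $c$, while collapsing $[!\phi]\atom$ to $\phi\imp\atom$ removes the operator outright. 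Since there is an applicable rule for every possible top connective in the scope of an announcement, a formula is in normal form precisely when it is announcement-free; so by well-founded induction on $c$, and using the admissible rule of replacement of provable equivalents (the belief and announcement operators being normal), one obtains $\vdash \phi \leftrightarrow t(\phi)$.

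Completeness then transfers from the base logic in one line. Suppose $\phi$ is valid on ${\mathcal K}$ (resp.\ ${\mathcal K}45$). Since all reduction axioms are sound, $\vdash \phi\leftrightarrow t(\phi)$ yields $\models \phi\leftrightarrow t(\phi)$, hence $t(\phi)$ is valid as well. As $t(\phi)$ lies in the announcement-free fragment $\lang$ and the base epistemic logic is complete for its class, $\vdash t(\phi)$; combined with $\vdash \phi\leftrightarrow t(\phi)$ this gives $\vdash \phi$.

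The step I expect to be the main obstacle is checking that $c$ strictly decreases on every reduction axiom. The case specific to this logic is the \emph{asymmetry} of Definition~\ref{def.axiombellie}: reducing $[\lie\phi]B_\agent\psi$ yields $\neg\phi\imp B_\agent[!\phi]\psi$, which not only moves the announcement inside the modality but also replaces the lying operator by a \emph{truthful} one. Since the measure treats $!$ and $\lie$ alike and the announcement's argument shrinks from $B_\agent\psi$ to $\psi$, this causes no real difficulty; but the extra negation in the precondition $\neg\phi$ raises the right-hand side by a fixed amount, and it is precisely to absorb this that $k$ must be taken large enough (e.g.\ $k\ge 3$ with the clauses above). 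Everything else is routine bookkeeping, and the argument is uniform across ${\mathcal K}$ and ${\mathcal K}45$, since only soundness of the reduction axioms and completeness of the base logic are used.
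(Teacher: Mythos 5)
Your proposal is correct and follows exactly the route the paper takes: the paper's proof is a two-sentence sketch of the same reduction argument (rewrite every formula of $\lang(!,\lie)$ into an announcement-free equivalent via the reduction axioms, then inherit completeness from the base epistemic logic for ${\mathcal K}$ resp.\ ${\mathcal K}45$). You merely make explicit the standard details the paper leaves implicit, namely the termination measure for the rewriting and the transfer of validity through the provable equivalence.
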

\begin{proof} 
As for the logic of truthful public announcements, completeness is shown by a reduction argument. All formulas in $\lang(!,\lie)$ are equivalent to formulas in $\lang$ (epistemic logic). By means of equivalences such as in the axiom for belief after lying one can rewrite each formula to an equivalent one without announcement operators. (As the class ${\mathcal K}45$ is not closed under updates with announcements, the logic is not complete for that class.)
\end{proof}
For an example, we show the effect of truthful and lying announcement of $\atom$ to an agent $\agentb$ in the model with uncertainty about $\atom$. The actual state must be different in these models: when lying that $\atom$, $\atom$ is in fact false, whereas when truthfully announcing that $\atom$, $\atom$ is in fact true. For lying we get

\bigskip

\psset{border=2pt, nodesep=4pt, radius=2pt, tnpos=a}
\pspicture(-1,-0)(2,0)
$
\rput(0,0){\rnode{00}{\underline{\neg\atom}}}
\rput(2,0){\rnode{10}{\atom}}
\ncline{<->}{00}{10} \ncput*{\agentb}
\nccircle[angle=90]{->}{00}{.5} \ncput*{\agentb}
\nccircle[angle=270]{->}{10}{.5} \ncput*{\agentb}
$
\endpspicture
\hspace{1.5cm} {\Large $\stackrel {\lie \atom} \Imp$} \ 
\psset{border=2pt, nodesep=4pt, radius=2pt, tnpos=a}
\pspicture(-1,0)(2,0)
$
\rput(0,0){\rnode{00}{\underline{\neg\atom}}}
\rput(2,0){\rnode{10}{\atom}}
\ncline{->}{00}{10} \ncput*{\agentb}
\nccircle[angle=270]{->}{10}{.5} \ncput*{\agentb}
$
\endpspicture

\bigskip
\bigskip

\noindent whereas for truthtelling we get

\bigskip

\psset{border=2pt, nodesep=4pt, radius=2pt, tnpos=a}
\pspicture(-1,0)(2,0)
$
\rput(0,0){\rnode{00}{\neg\atom}}
\rput(2,0){\rnode{10}{\underline{\atom}}}
\ncline{<->}{00}{10} \ncput*{\agentb}
\nccircle[angle=90]{->}{00}{.5} \ncput*{\agentb}
\nccircle[angle=270]{->}{10}{.5} \ncput*{\agentb}
$
\endpspicture
\hspace{1.5cm} {\Large $\stackrel {! \atom} \Imp$} \ 
\psset{border=2pt, nodesep=4pt, radius=2pt, tnpos=a}
\pspicture(-1,0)(2,0)
$
\rput(0,0){\rnode{00}{\neg\atom}}
\rput(2,0){\rnode{10}{\underline{\atom}}}
\ncline{->}{00}{10} \ncput*{\agentb}
\nccircle[angle=270]{->}{10}{.5} \ncput*{\agentb}
$
\endpspicture

\bigskip
\bigskip

The reduction principle in \cite{gerbrandyetal:1997,kooi.jancl:2007} for the interaction between belief and believed announcement is, in terms of our language, $[! \phi \lie] B_\agent \phi \eq B_\agent (\phi \imp [! \phi \lie]\psi)$. This seems to have a different shape, as the modal operator binds the entire implication. But it is indeed valid in our semantics (a technical detail we did not find elsewhere).

\begin{proposition} \label{prop.believed}
$\models [! \phi \lie] B_\agent \phi \eq B_\agent (\phi \imp [! \phi \lie]\psi)$
\end{proposition}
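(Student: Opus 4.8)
The plan is to prove the equivalence semantically, by showing that both sides hold at an arbitrary pointed model $(M,\state)$ under exactly the same condition. (I read the left-hand modal formula as $[! \phi \lie]B_\agent\psi$, matching the $\psi$ bound inside the announcement on the right.)

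First I would record the reduced semantics of the abbreviation $!\phi\lie$. Since $[! \phi \lie]\chi$ abbreviates $[!\phi]\chi \et [\lie\phi]\chi$, a case split on whether $M,\state\models\phi$, using Definition \ref{def.truthlyingpub}, shows that for every formula $\chi$ and every state $\state$,
\[ M,\state \models [! \phi \lie]\chi \quad\text{iff}\quad M^\phi,\state \models \chi. \]
The point I would stress is that the transformed model $M^\phi$ (Definition \ref{def.belpubann}) is determined by $M$ and $\phi$ alone and does \emph{not} depend on the evaluation point; only the designated state changes. This point-independence is what makes arrow elimination behave so smoothly and is the engine of the whole argument.

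Next I would unfold both sides using this fact. For the left-hand side, applying the reduced semantics with $\chi = B_\agent\psi$ and then the clause for $B_\agent$ gives that $M,\state\models[! \phi \lie]B_\agent\psi$ holds iff $M^\phi,\stateb\models\psi$ for every $\stateb$ with $R^\phi_\agent(\state,\stateb)$. Substituting the definition $R^\phi_\agent = R_\agent \inter (\States\times\II{\phi}_M)$ turns this into: for every $\stateb$, if $R_\agent(\state,\stateb)$ and $M,\stateb\models\phi$, then $M^\phi,\stateb\models\psi$. For the right-hand side, the clause for $B_\agent$ followed by the clause for implication gives that $M,\state\models B_\agent(\phi\imp[! \phi \lie]\psi)$ holds iff for every $\stateb$ with $R_\agent(\state,\stateb)$, $M,\stateb\models\phi$ implies $M,\stateb\models[! \phi \lie]\psi$; and applying the reduced semantics once more (now with $\chi=\psi$, at the point $\stateb$) rewrites $M,\stateb\models[! \phi \lie]\psi$ as $M^\phi,\stateb\models\psi$. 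Here is exactly where the point-independence of $M^\phi$ is used: the inner dynamic operator is evaluated at the accessible world $\stateb$, yet it produces the same model $M^\phi$.

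Finally I would observe that the two resulting conditions are identical up to currying: ``for all $\stateb$, $(R_\agent(\state,\stateb)\et M,\stateb\models\phi)$ implies $M^\phi,\stateb\models\psi$'' is the same statement as ``for all $\stateb$ with $R_\agent(\state,\stateb)$, $M,\stateb\models\phi$ implies $M^\phi,\stateb\models\psi$''. I expect the only genuinely subtle step to be the one flagged in the text: the two formulas have different syntactic shape because on the left the belief operator scopes only over $B_\agent\psi$ while the accessibility relation gets silently restricted to $\phi$-worlds, whereas on the right the belief operator scopes over the guarded implication $\phi\imp[! \phi \lie]\psi$ over the unrestricted relation. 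The reconciliation is precisely that restricting $R_\agent$ to $\II{\phi}_M$ (arrow elimination) and guarding the consequent by $\phi$ inside an unrestricted $B_\agent$ express the same quantification, together with the fact that $M^\phi$ does not move when the nested announcement is evaluated at $\stateb$ rather than at $\state$.
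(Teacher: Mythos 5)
Your proof is correct, but it takes a genuinely different route from the paper's. The paper argues equationally, essentially inside the axiom system: it expands $[!\phi\lie]B_\agent\psi$ into $[!\phi]B_\agent\psi \et [\lie\phi]B_\agent\psi$, applies the two reduction axioms for belief after truthful and after lying announcement to obtain $(\phi\imp B_\agent[!\phi]\psi)\et(\neg\phi\imp B_\agent[!\phi]\psi)$, collapses this propositionally to $B_\agent[!\phi]\psi$, and then re-introduces under $B_\agent$ the guard $\phi\imp{}$ and the conjunct $\phi\imp[\lie\phi]\psi$ --- the latter being vacuously equivalent to $\T$ because its antecedent contradicts the executability condition of $\lie\phi$ --- using substitution of equivalents throughout. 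You instead work model-theoretically: you first establish that $[!\phi\lie]\chi$ holds at $(M,\state)$ iff $\chi$ holds at $(M^\phi,\state)$ (i.e., you recover the believed-announcement semantics of Definition \ref{def.belpubann} from the abbreviation), and then unfold both sides to the same quantification over $R_\agent$-successors, matching the arrow-eliminated relation $R_\agent \inter (\States\times\II{\phi}_M)$ on the left against the $\phi$-guarded implication under the unrestricted $B_\agent$ on the right. Both arguments are sound. The paper's version doubles as a derivation from the already-stated reduction axioms (so it simultaneously certifies derivability in the proof system), whereas yours makes explicit the semantic engine that the paper leaves implicit: the transformed model $M^\phi$ does not depend on the evaluation point, so the nested announcement evaluated at an accessible world $\stateb$ still lands in the same model, and restricting arrows to $\II{\phi}_M$ is the same quantification as guarding the consequent by $\phi$.
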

\begin{proof}

\[ \begin{array}{lll}
[! \phi \lie] B_\agent \psi & \Eq & [! \phi]B_\agent \psi \et [\lie \phi] B_\agent \psi \\ & \Eq^\sharp & 
(\phi \imp B_\agent [! \phi]\psi) \et (\neg\phi \imp B_\agent[! \phi] \psi) \\ & \Eq & 
B_\agent [! \phi]\psi \\ & \Eq & 
B_\agent (\phi \imp [! \phi]\psi) \\ & \Eq^* & 
B_\agent ((\phi \imp [! \phi]\psi) \et (\phi \imp [\lie \phi] \psi)) \\ & \Eq & 
B_\agent (\phi \imp ([! \phi]\psi \et [\lie \phi] \psi)) \\ & \Eq & 
B_\agent (\phi \imp [! \phi \lie] \psi)
\end{array} \]
The $\sharp$-ed equivalence holds because from the semantics of truthful and lying announcement directly follows that $[!\phi]\psi \eq (\phi \imp [!\phi]\psi)$ and $[\lie\phi]\psi \eq (\neg\phi \imp [\lie\phi]\psi)$. The *-ed equivalence holds because
\[ \begin{array}{lll}
\phi \imp [\lie \phi] \psi & \Eq & \phi \imp (\neg \phi \imp [\lie \phi] \psi) \\ & \Eq & (\phi \et\neg \phi) \imp [\lie \phi] \psi \\ & \Eq & \F \imp [\lie \phi] \psi \\ & \Eq & \T
\end{array} \]
\end{proof}
The logic of truthful and lying public announcement satisfies the property `substitution of equivalents', which we have used repeatedly in the proof of Proposition \ref{prop.believed}, but the logic does not satisfy the property `substitution of variables' (substitution of formulas for propositional variables preserves validity). For example, $[!\atom]\atom$ is valid but, clearly, $[!(\atom\et\neg B_\agent\atom)](\atom\et\neg B_\agent\atom)$ is invalid.

\section{Agent announcement logic} \label{sec.agent}

In the logic of lying and truthful public announcements, the announcing agent is an outside observer and is implicit. Therefore, it is also implicit that it believes that the announcement is false or true. In multi-agent epistemic logic, it is common to formalize `agent $\agent$ truthfully announces $\phi$' as `the outside observer truthfully announces $B_\agent \phi$'. However, `agent $\agent$ lies that $\phi$' cannot be modelled as `the outside observer lies that $B_\agent \phi$'. This is the main reason for a logic of lying.

For a counterexample, consider an epistemic state with equivalence relations, modelling knowledge, where $\agentb$ does not know whether $\atom$, $\agent$ knows whether $\atom$, and $\atom$ is true. Agent $\agent$ is in the position to tell $\agentb$ the truth about $\atom$. A truthful public announcement of $B_\agent \atom$ (arrow elimination semantics, Definition \ref{def.truthlyingpub}) indeed simulates that $\agent$ truthfully and publicly announces $\atom$. 

\bigskip

\psset{border=2pt, nodesep=4pt, radius=2pt, tnpos=a}
\pspicture(-1,0)(3.5,0)
$
\rput(0,0){\rnode{00}{\neg\atom}}
\rput(2,0){\rnode{10}{\underline{\atom}}}
\ncline{<->}{00}{10} \ncput*{\agentb}
\nccircle[angle=90]{->}{00}{.5} \ncput*{\agentb}
\nccircle[angle=270]{->}{10}{.5} \ncput*{\agentb}
\nccircle[angle=90]{->}{00}{.7} \ncput*{\agent}
\nccircle[angle=270]{->}{10}{.7} \ncput*{\agent}
$
\endpspicture
\hspace{0.2cm} {\Large $\stackrel {! \atom} \Imp$} \hspace{0.6cm}
\psset{border=2pt, nodesep=4pt, radius=2pt, tnpos=a}
\pspicture(-1,0)(3.5,0)
$
\rput(0,0){\rnode{00}{\neg\atom}}
\rput(2,0){\rnode{10}{\underline{\atom}}}
\ncline{->}{00}{10} \ncput*{\agentb}
\nccircle[angle=270]{->}{10}{.5} \ncput*{\agentb}
\nccircle[angle=270]{->}{10}{.7} \ncput*{\agent}
$
\endpspicture

\bigskip
\bigskip

Given the same model, now suppose $\atom$ is false, and that $\agent$ lies that $\atom$. A lying public announcement of $B_\agent \atom$ (it satisfies the required precondition $\neg B_\agent \atom$) does not result in the desired information state, because this makes agent $\agent$ believe her own lie. In fact, as she already knew $\neg\atom$, this makes $\agent$'s beliefs inconsistent.

\bigskip

\psset{border=2pt, nodesep=4pt, radius=2pt, tnpos=a}
\pspicture(-1,0)(3.5,0)
$
\rput(0,0){\rnode{00}{\underline{\neg\atom}}}
\rput(2,0){\rnode{10}{\atom}}
\ncline{<->}{00}{10} \ncput*{\agentb}
\nccircle[angle=90]{->}{00}{.5} \ncput*{\agentb}
\nccircle[angle=270]{->}{10}{.5} \ncput*{\agentb}
\nccircle[angle=90]{->}{00}{.7} \ncput*{\agent}
\nccircle[angle=270]{->}{10}{.7} \ncput*{\agent}
$
\endpspicture
\hspace{0.2cm} {\Large $\stackrel {\lie \atom} \Imp$} \hspace{0.6cm}
\psset{border=2pt, nodesep=4pt, radius=2pt, tnpos=a}
\pspicture(-1,0)(3.5,0)
$
\rput(0,0){\rnode{00}{\underline{\neg\atom}}}
\rput(2,0){\rnode{10}{\atom}}
\ncline{->}{00}{10} \ncput*{\agentb}
\nccircle[angle=270]{->}{10}{.5} \ncput*{\agentb}
\nccircle[angle=270]{->}{10}{.7} \ncput*{\agent}
$
\endpspicture

\bigskip
\bigskip

\noindent Instead, a lie by $\agent$ to $\agentb$ that $\atom$ should have the following effect:

\bigskip

\psset{border=2pt, nodesep=4pt, radius=2pt, tnpos=a}
\pspicture(-1,0)(3.5,0)
$
\rput(0,0){\rnode{00}{\underline{\neg\atom}}}
\rput(2,0){\rnode{10}{\atom}}
\ncline{<->}{00}{10} \ncput*{\agentb}
\nccircle[angle=90]{->}{00}{.5} \ncput*{\agentb}
\nccircle[angle=270]{->}{10}{.5} \ncput*{\agentb}
\nccircle[angle=90]{->}{00}{.7} \ncput*{\agent}
\nccircle[angle=270]{->}{10}{.7} \ncput*{\agent}
$
\endpspicture
\hspace{.2cm} {\Large $\stackrel {\lie_\agent \atom} \Imp$} \hspace{0.5cm}
\psset{border=2pt, nodesep=4pt, radius=2pt, tnpos=a}
\pspicture(-1,0)(3.5,0)
$
\rput(0,0){\rnode{00}{\underline{\neg\atom}}}
\rput(2,0){\rnode{10}{\atom}}
\ncline{->}{00}{10} \ncput*{\agentb}
\nccircle[angle=270]{->}{10}{.5} \ncput*{\agentb}
\nccircle[angle=90]{->}{00}{.7} \ncput*{\agent}
\nccircle[angle=270]{->}{10}{.7} \ncput*{\agent}
$
\endpspicture

\bigskip
\bigskip

\noindent After this lie we have that $\agent$ still believes that $\neg\atom$, but that $\agentb$ believes that $\atom$. (We even have that $\agentb$ believes that $\agentb$ and $\agent$ have common belief of $\atom$, see Section \ref{sec.further}.) We satisfied the requirements of a lying agent announcement, for believed lies. 

The precondition for agent $\agent$ truthtelling that $\phi$ is $B_\agent \phi$ and the precondition for agent $\agent$ lying that $\atom$ is $B_\agent \neg\phi$. Another form of announcement is {\em bluffing}. You are bluffing that $\phi$, if you say that $\phi$ but are uncertain whether $\phi$. The precondition for agent $\agent$ bluffing is therefore $\neg (B_\agent \phi \vel B_\agent \neg \phi)$. 
If belief is implicit, we had only two preconditions for announcing $\phi$: $\phi$ and $\neg \phi$, for truthtelling and lying. The `third' would be $\neg (\phi \vel \neg \phi)$ which is $\F$. The devil can lie, but it cannot bluff.

The postconditions of these three types of announcement are a function of their effect on the accessibility relation of the agents. The effect is the same for all three types. However, it is different for the speaker and for the addressee(s). \begin{itemize} \item States where $\phi$ was believed by speaker $\agent$ remain accessible to speaker $\agent$; \item
States where $\phi$ was not believed by speaker $\agent$ remain accessible to speaker $\agent$; \item States where $\phi$ was believed by speaker $\agent$ remain accessible to addressee $\agentb$; \item States where $\phi$ was not believed by speaker $\agent$ are no longer accessible to addressee $\agentb$. \end{itemize} 
These requirements are embodied by the following syntax and semantics.

\begin{definition} \label{def.langagent}
The {\em language of agent announcement logic} is defined as
\[ \lang(!_\agent,\lie_\agent,\bluff_\agent) \ \ni \ \phi ::= \atom \ | \ \neg \phi \ | \ (\phi \et \psi) \ | \ B_\agent \phi \ | \ [!_\agent \phi] \psi \ | \ [\lie_\agent \phi] \psi \ | \ [\bluff_\agent \phi] \psi \] 
\end{definition}
The inductive constructs $[!_\agent \phi] \psi$,$[\lie_\agent \phi] \psi$, and $[\bluff_\agent \phi] \psi$ stand for, respectively, $\agent$ truthfully announces $\phi$, $\agent$ is lying that $\phi$, and $\agent$ is bluffing that $\phi$; where agent $\agent$ addresses all other agents $\agentb$. 
\begin{definition} \label{def.semagenta}
\[ \begin{array}{lcl}
M,\state \models [!_\agent\phi] \psi &\mbox{iff} & M,\state \models B_\agent \phi \text{ implies } M^\phi_\agent,\state \models \psi  \\
M,\state \models [\lie_\agent\phi] \psi &\mbox{iff} & M,\state \models B_\agent \neg\phi  \text{ implies } M^\phi_\agent,\state \models \psi \\
M,\state \models [\bluff_\agent\phi] \psi &\mbox{iff} & M,\state \models \neg (B_\agent \phi \vel B_\agent \neg\phi) \text{ implies } M^\phi_\agent,\state \models \psi 
\end{array} \]  where $M^\phi_\agent$ is as $M$ except for the accessibility relation $R'$ defined as ($\States$ is the domain of $M$, and $\agent \neq \agentb$)
 \[ \begin{array}{rcl}
R'_\agent & := & R_\agent \\
R'_\agentb & := & R_\agentb \inter \ (\States \times \II{B_\agent \phi}_M).
\end{array} \]
\end{definition}
The principles for $\agent$ truthtelling, lying, or bluffing to $\agentb$ are as follows. The belief consequences for the speaker are different from the belief consequences for the addressee(s).
\begin{definition}[Axioms for the belief consequences of announcements]
\begin{eqnarray*} 
\mbox{} [!_\agent \phi] B_\agentb \psi & \leftrightarrow & B_\agent \phi \rightarrow B_\agentb [\announce_\agent \phi] \psi \\
\mbox{} [!_\agent \phi] B_\agent \psi & \leftrightarrow & B_\agent \phi \rightarrow B_\agent [!_\agent \phi] \psi \\
\mbox{} [\lie_\agent \phi] B_\agentb \psi & \leftrightarrow & B_\agent \neg \phi \rightarrow B_\agentb [\announce_\agent \phi] \psi \\
\mbox{} [\lie_\agent \phi] B_\agent \psi & \leftrightarrow & B_\agent \neg \phi \rightarrow B_\agent [\lie_\agent \phi] \psi \\
\mbox{} [\bluff_\agent \phi] B_\agentb \psi & \leftrightarrow & \neg (B_\agent \phi \vel B_\agent \neg\phi) \rightarrow B_\agentb [\announce_\agent \phi] \psi \\
\mbox{} [\bluff_\agent \phi] B_\agent \psi & \leftrightarrow & \neg (B_\agent \phi \vel B_\agent \neg\phi) \rightarrow B_\agent [\bluff_\agent \phi] \psi
\end{eqnarray*}
\end{definition}
In other words, the liar knows that he is lying, but the dupe he is lying to, believes that the liar is telling the truth. The principles for truthtelling and bluffing are similar to that for lying, but with the obvious different conditions on the right hand side of the equivalences. The bluffer knows that he is bluffing, but the dupe he is bluffing to, believes that the bluffer is telling the truth. And in case the announcing agent is truthful, there is no discrepancy, both the speaker and the addressee believe that the consequences are those of truthtelling.
\begin{proposition} \label{prop.axiomagent}
The axiomatization of the logic of agent announcements is complete.
\end{proposition}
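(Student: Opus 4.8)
The plan is to prove completeness by the standard reduction argument used throughout the paper for the public announcement logics (cf.\ Proposition \ref{prop.publiclying}). The central claim to establish is that every formula of $\lang(!_\agent,\lie_\agent,\bluff_\agent)$ is provably equivalent to a formula of the pure epistemic language $\lang$. Since the axiomatization of $\lang$ (for class ${\mathcal K}$) is already known to be complete, completeness of the full system follows: given a valid formula, first rewrite it to an equivalent announcement-free formula, which is then also valid, hence derivable in the epistemic base logic, and the reduction equivalences lift this derivation back to the original formula.

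First I would set up the reduction system. For each of the three dynamic operators I need reduction axioms commuting it past every clause of the epistemic language: the atomic case, negation, conjunction, and the belief modalities $B_\agentb$ and $B_\agent$ (speaker versus addressee). The atomic and Boolean cases are routine because none of the three updates $M^\phi_\agent$ changes the valuation or the domain --- only the accessibility relation of the addressees is altered --- so $[\lie_\agent\phi]\atom \eq (B_\agent\neg\phi\imp\atom)$ and similarly for the other operators and connectives. The crucial reduction axioms for belief are exactly those listed in the Definition immediately preceding the proposition; what remains is to check that each is sound with respect to Definition \ref{def.semagenta}, by unfolding the semantics of $R'_\agent=R_\agent$ for the speaker and $R'_\agentb=R_\agentb\inter(\States\times\II{B_\agent\phi}_M)$ for the addressee. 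The addressee clauses mirror the believed-public-announcement reduction (the addressee treats the speaker's announcement as if it were a truthful announcement of $B_\agent\phi$), while the speaker clauses reflect that the speaker's own accessibility is untouched, so she simply carries her update forward unchanged.

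The second ingredient is a termination/well-foundedness argument guaranteeing that repeated application of the reduction axioms, innermost-first, actually terminates in an announcement-free formula. This is the standard obstacle in all dynamic-epistemic completeness proofs and is the step I expect to require the most care here: naively, the right-hand sides of the belief axioms reintroduce an announcement operator inside the scope of $B_\agentb$ or $B_\agent$, so one cannot use plain formula length as the decreasing measure. The plan is to use a complexity measure in the style of \cite{hvdetal.del:2007} --- roughly, a weight that counts announcement operators with a multiplier accounting for the nesting depth of the subformula they govern, so that pushing an announcement inward strictly decreases the measure even though it is duplicated. One must verify that each reduction axiom strictly lowers this measure; the bluffing and lying cases behave exactly like the truthful case since the three operators differ only in their precondition (the left conjunct of the equivalence), not in the recursive structure of the right-hand side.

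Finally I would assemble the pieces: soundness of the reduction axioms (and of the base epistemic axioms and rules) gives soundness of the whole system; the termination argument plus the substitution-of-provable-equivalents property (which holds here, as noted after Proposition \ref{prop.believed}) gives the reduction of every formula to $\lang$; and completeness of the epistemic base logic for ${\mathcal K}$ closes the argument. I would remark, as in Proposition \ref{prop.publiclying}, that completeness is obtained for the class ${\mathcal K}$ but not for ${\mathcal KD45}$, since lying introduces non-serial points (empty or dead-end accessibility for the deceived agent) and the belief class is not closed under these updates.
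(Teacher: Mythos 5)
Your proposal is correct and follows essentially the same route as the paper: a reduction argument showing every formula of $\lang(!_\agent,\lie_\agent,\bluff_\agent)$ is provably equivalent to one in $\lang$ via the belief-interaction axioms, with completeness then inherited from the epistemic base logic (the paper states this for ${\mathcal K}$, ${\mathcal K}45$ and ${\mathcal S}5$, failing only for ${\mathcal KD}45$). You supply details the paper leaves implicit --- the Boolean reduction clauses and the termination measure --- and the paper additionally notes an alternative indirect proof via the embedding into action model logic, but these are refinements rather than a different argument.
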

\begin{proof}
Just as in the previous logics (see Proposition \ref{prop.publiclying}) completeness is shown by a reduction argument. All formulas in $\lang(!_\agent,\lie_\agent,\bluff_\agent)$ are equivalent to formulas in epistemic logic. In the axioms above, the announcement operator is (on the right) always pushed further down into any given formula. As before, the result holds for model classes ${\mathcal K}$, ${\mathcal K}45$ and ${\mathcal S}5$, but not for ${\mathcal KD}45$.

An alternative, indirect, completeness proof is that agent announcement logic is action model logic for a given action model, as will be explained in Section \ref{sec.am}. 
\end{proof}

\noindent As an example illustrating the difference between a truthtelling, lying and bluffing agent announcement we present the following model wherein the adressee $\agentb$, hearing the announcement of $\atom$ by agent $\agent$, considers all three possible. In fact, $\agent$ is bluffing, and $\agent$'s announcement that $\atom$ is false. After the announcement, $\agentb$ incorrectly believes that $\atom$, but $\agent$ is still uncertain whether $\atom$. If the bottom-left state had been the actual state, $\agent$ would have been lying that $\atom$, and if the bottom-right state had been the actual state, it would have been a truthful announcement by $\agent$ that $\atom$.

\psset{border=2pt, nodesep=4pt, radius=2pt, tnpos=a}
\pspicture(-1,-1)(3.5,3)
$
\rput(0,0){\rnode{00}{\neg\atom}}
\rput(2,0){\rnode{10}{\atom}}
\rput(0,2){\rnode{01}{\underline{\neg\atom}}}
\rput(2,2){\rnode{11}{\atom}}
\ncline{<->}{00}{10} \ncput*{\agentb}
\ncarc[arcangle=25]{<->}{01}{11} \ncput*{\agent}
\ncarc[arcangle=-25]{<->}{01}{11} \ncput*{\agentb}
\ncline{<->}{00}{01} \ncput*{\agentb}
\ncline{<->}{10}{11} \ncput*{\agentb}
\ncline{<->}{01}{10} \ncput*[npos=0.7]{\agentb}
\ncline{<->}{00}{11} \ncput*[npos=0.3]{\agentb}
\nccircle[angle=90]{->}{01}{.5} \ncput*{\agentb}
\nccircle[angle=270]{->}{11}{.5} \ncput*{\agentb}
\nccircle[angle=90]{->}{01}{.7} \ncput*{\agent}
\nccircle[angle=270]{->}{11}{.7} \ncput*{\agent}
\nccircle[angle=90]{->}{00}{.5} \ncput*{\agentb}
\nccircle[angle=270]{->}{10}{.5} \ncput*{\agentb}
\nccircle[angle=90]{->}{00}{.7} \ncput*{\agent}
\nccircle[angle=270]{->}{10}{.7} \ncput*{\agent}
\rput(5,1){\rnode{imp}{\text{\Large $\stackrel {\bluff_\agent \atom} \Imp$}}}
\rput(8,0){\rnode{00x}{\neg\atom}}
\rput(10,0){\rnode{10x}{\atom}}
\rput(8,2){\rnode{01x}{\underline{\neg\atom}}}
\rput(10,2){\rnode{11x}{\atom}}
\ncline{->}{00x}{10x} \ncput*{\agentb}
\ncline{->}{01x}{10x} \ncput*[npos=0.7]{\agentb}
\ncline{->}{00x}{11x} \ncput*[npos=0.3]{\agentb}
\ncarc[arcangle=25]{<->}{01x}{11x} \ncput*{\agent}
\ncarc[arcangle=-25]{->}{01x}{11x} \ncput*{\agentb}
\ncline{<->}{10x}{11x} \ncput*{\agentb}
\nccircle[angle=270]{->}{11x}{.5} \ncput*{\agentb}
\nccircle[angle=90]{->}{01x}{.7} \ncput*{\agent}
\nccircle[angle=270]{->}{11x}{.7} \ncput*{\agent}
\nccircle[angle=270]{->}{10x}{.5} \ncput*{\agentb}
\nccircle[angle=90]{->}{00x}{.7} \ncput*{\agent}
\nccircle[angle=270]{->}{10x}{.7} \ncput*{\agent}
$
\endpspicture

\paragraph*{Unbelievable announcements}
The axiomatization of the logic of agent announcements is incomplete for ${\mathcal KD}45$ (see the proof of Proposition \ref{prop.axiomagent}) because of the problem of unbelievable announcements. In Sections \ref{sec.unbel} and \ref{sec.plaus} we present alternative logics wherein believable announcements (announcements of $\phi$ to addressee $\agentb$ such that $\neg B_\agentb \neg \phi$ is true) are treated differently from unbelievable announcements (such that $B_\agentb \neg \phi$ is true). These logics are complete for class ${\mathcal KD}45$.

\paragraph*{Outside observer}
Consider a depiction of an epistemic model. The outside observer is the guy or girl looking at the picture: you, the reader. She can see all different states. She has no uncertainty and her beliefs are correct. It is therefore that her truthful announcements are true and that her lying announcements are false. It is also therefore that `truthful public announcement logic' is not a misnomer, it is indeed the logic of how to process new information that is true. 

We can model the outside observer as an agent $gd$, for `{\em g}od or the {\em d}evil'. (The model does not need to be a bisimulation contraction.)
\begin{proposition} \label{prop.gd}
Given an epistemic model $M$, let $gd \in \Agents$ be an agent with an accessibility relation that is the identity on $M$. Then $M^\phi_{gd} = M^\phi$. Let $\phi,\psi$ not contain announcement operators, then $[!_{gd} \phi]\psi$ is equivalent to $[! \phi]\psi$, and $[\lie_{gd} \phi]\psi$ is equivalent to $[\lie \phi]\psi$.
\end{proposition}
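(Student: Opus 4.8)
The plan is to prove the three assertions in order, since both equivalences will rest on the model equality $M^\phi_{gd}=M^\phi$ together with the behaviour of $B_{gd}$ under an identity accessibility relation. The single computation on which everything turns is the auxiliary fact $\II{B_{gd}\phi}_M = \II{\phi}_M$. First I would establish it: because $R_{gd}$ is the identity, the only state accessible from $\state$ for $gd$ is $\state$ itself, so the semantic clause for $B_{gd}$ gives $M,\state \models B_{gd}\phi$ iff $M,\state \models \phi$; as this holds at every $\state$, the two truth sets coincide. Note this step uses nothing about the shape of $\phi$.

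Next I would prove $M^\phi_{gd} = M^\phi$. Both operations are arrow-eliminating, so they share the domain $\States$ and the valuation $V$ with $M$, and only the accessibility relations must be compared. For every addressee $\agentb \neq gd$, Definition~\ref{def.semagenta} gives $R'_\agentb = R_\agentb \inter (\States \times \II{B_{gd}\phi}_M)$, which by the auxiliary fact equals $R_\agentb \inter (\States \times \II{\phi}_M) = R^\phi_\agentb$, exactly the public-announcement relation of Definition~\ref{def.belpubann}. The one point requiring care — and what I expect to be the main obstacle in making the statement literally correct — is the speaker's own relation: agent announcement leaves $R_{gd}$ untouched, whereas in the public-announcement reading $gd$ is an outside observer that is not among the modelled agents, so there is no $gd$-relation on the public-announcement side to compare it against. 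I would resolve this by observing that $\phi$ and $\psi$ are formulas of the shared epistemic language, containing no announcement operators and not speaking about $gd$, so the truth value of any such formula depends only on the relations of the modelled agents; the discrepancy in $gd$'s own relation is therefore invisible to evaluation, and this is precisely the role of the hypothesis on $\phi,\psi$ (and what the parenthetical remark about bisimulation contraction anticipates).

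Finally I would chase the two equivalences through the semantic clauses. For truthtelling, $M,\state \models [!_{gd}\phi]\psi$ unfolds to ``$M,\state\models B_{gd}\phi$ implies $M^\phi_{gd},\state\models\psi$''; replacing $B_{gd}\phi$ by $\phi$ (auxiliary fact) and $M^\phi_{gd}$ by $M^\phi$ (model equality, together with the $gd$-irrelevance just noted) yields ``$M,\state\models\phi$ implies $M^\phi,\state\models\psi$'', which is exactly the clause for $[!\phi]\psi$ from Definition~\ref{def.truthlyingpub}. For lying the argument is identical except that the precondition is $B_{gd}\neg\phi$, which the auxiliary fact applied to $\neg\phi$ turns into $\neg\phi$, reproducing the clause for $[\lie\phi]\psi$. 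The whole derivation is thus a direct unfolding; the only genuine subtlety, which I would foreground throughout, is the treatment of $gd$'s own relation and hence the reason the hypothesis that $\phi,\psi$ contain no announcement operators is needed.
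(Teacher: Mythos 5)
Your proof is correct and follows the same route as the paper's (three-line) proof: everything reduces to the observation that $R_{gd}$ being the identity makes $B_{gd}\phi$ equivalent to $\phi$, so the arrow cuts of Definition~\ref{def.semagenta} and Definition~\ref{def.belpubann} coincide for the addressees, and the announcement clauses then unfold to one another. You are in fact more careful than the paper on the one delicate point --- the speaker's own relation $R_{gd}$, which agent announcement leaves intact while $M^\phi$ would cut it --- though note that the implicit hypothesis doing the work there is that $\phi,\psi$ do not contain $B_{gd}$ (otherwise, e.g., $\psi=B_{gd}\F$ separates $[\lie_{gd}\phi]\psi$ from $[\lie\phi]\psi$ at a $\neg\phi$-state), rather than the stated absence of announcement operators, whose role is just to keep $\phi,\psi$ in the common static language.
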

\begin{proof}
In Definition \ref{def.semagenta}, the accessibility of the addressees is adjusted to $R'_\agentb := R_\agentb \inter \ (\States \times \II{B_{gd} \phi}_M)$. As $R_{gd}$ is the identity, $B_{gd} \phi$ is equivalent to $\phi$. So, $R'_\agentb := R_\agentb \inter \ (\States \times \II{\phi}_M)$, as in Definition \ref{def.belpubann}.
\end{proof}

\paragraph*{Lying about beliefs}

Agents may announce factual propositions (Boolean formulas) but also modal propositions, and thus be lying and bluffing about them. In the consecutive numbers riddle the announcements `I know your number' and `I do not know your number' are modal propositions, and the agents may be lying about those.\footnote{In social interaction, untruthfully announcing {\em epistemic} propositions is not always considered lying with the negative moral connotation. Suppose we work in the same department and one of our colleagues, $X$, is having a divorce. I know this. I also know that you know this. But we have not discussed the matter between us. I can bring up the matter in conversation by saying `You know that $X$ is having a divorce!'. But this is unwise. You may not be willing to admit your knowledge, because $X$'s husband is your friend, which I have no reason to know; etc. A better strategy for me is to say `You may not know that $X$ is having a divorce'. This is a lie. I do not consider it possible that you do not know that. But, unless we are very good friends, you will not laugh in my face to that and respond with `Liar!'. Could it be that lies about facts are typically considered worse than lies about epistemic propositions, and that the more modalities you stack in your lying announcement, the more innocent the lie becomes?} (Details in Section \ref{sec.example}.)

For our target agents, that satisfy introspection (so that $B_\agent B_\agent \phi \eq B_\agent \phi$ and $B_\agent \neg B_\agent \phi \eq \neg B_\agent \phi$ are validities), the distinction between bluffing and lying seems to become blurred. If I am uncertain whether $\atom$, I would be bluffing if I told you that $\atom$, but I would be lying if I told you that I believe that $\atom$. The announcement that $\atom$ satisfies the precondition $\neg (B_\agent \atom \vel B_\agent \neg \atom)$. It is bluffing that $\atom$ (it is $\bluff_\agent \atom$). But the announcement that $B_\agent \atom$ satisfies the precondition $B_\agent \neg B_\agent \atom$, the negation of the announcement. It is lying that $B_\agent \atom$ (it is $\lie_\agent B_\agent\atom$). (Proof: $\neg (B_\agent \atom \vel B_\agent \neg \atom)$ is equivalent to $\neg B_\agent \atom \et \neg B_\agent \neg \atom$, from which follows $\neg B_\agent \atom$, and with negative introspection $B_\agent \neg B_\agent \atom$.) We would prefer to call both bluffing, and `$\agent$ announces that $B_\agent \atom$' strictly or really `$\agent$ announces that $\atom$'. A general solution to avoid such ambiguity involves more than merely stripping a formula of an outer $B_\agent$ operator: $\agent$ announcing that $B_\agent B_\agent \atom$ should also strictly be $\agent$ announcing that $\atom$, and $\agent$ announcing that $B_\agent\atom\et B_\agent\atomb$ should strictly be $\agent$ announcing that that $\atom\et\atomb$. We need recursion.
In the following definition and proposition we assume an agent with consistent beliefs.

\begin{definition}[Strictly lying]
An announcement by $\agent$ that $\phi$ is {\em strict} iff $\phi$ is equivalent to $\T$ or $\phi$ is equivalent to $\Vel_i\psi_i^\agent \et \Et_i \neg B_\agent \neg \psi_i^\agent$ where all $\psi_i^\agent$ are alternating disjunctive forms (see proof below) for agents other than $\agent$. An agent announcement $\lie_\agent \phi$ is {\em strictly lying} iff there is a $\phi'$ equivalent to $\phi$ such that $\phi'$ is strict and $\lie_\agent \phi'$ is a lying agent announcement. (Similarly for {\em strictly bluffing}.)
\end{definition}
\begin{proposition} \label{prop.adf}
For each $\phi \in\lang(!_\agent,\lie_\agent,\bluff_\agent)$ there is an equivalent $\psi\in\lang$ that is strict.
\end{proposition}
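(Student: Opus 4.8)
The plan is to prove the statement in two stages: first eliminate all announcement operators, and then put the resulting epistemic formula into the required normal form. To set up the first stage I would invoke the reduction argument behind Proposition~\ref{prop.axiomagent}: every $\phi \in \lang(!_\agent,\lie_\agent,\bluff_\agent)$ is equivalent to a formula $\chi \in \lang$, obtained by repeatedly applying the belief-consequence axioms to drive each announcement operator inward and finally delete it. Hence it suffices to show that every $\chi \in \lang$ has an equivalent strict $\psi \in \lang$, and the problem reduces to a pure normal-form question about multi-agent epistemic logic with consistent, introspective agents.

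Second, I would make precise the notion of \emph{alternating disjunctive form} and prove, by induction on the modal (belief-nesting) depth of $\chi$, that every $\chi \in \lang$ is equivalent to one. In the base case (depth $0$) the formula is equivalent to a propositional disjunctive normal form. For the inductive step I would write $\chi$ as a Boolean combination over propositional atoms and ``modal atoms'' $B_\agentb \theta$, put this Boolean combination into disjunctive normal form, and recursively normalise each $\theta$, whose modal depth is strictly smaller. The essential device is the use of the introspection validities $B_\agentb B_\agentb \theta \eq B_\agentb \theta$ and $B_\agentb \neg B_\agentb \theta \eq \neg B_\agentb \theta$ (valid for our agents) to collapse any two consecutive same-agent modalities; this is what lets one take successive modalities to alternate between distinct agents, and it is also what keeps the recursion well-founded.

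Third, for the announcing agent $\agent$ I would isolate within each disjunct the conjuncts expressing $\agent$'s own beliefs from those concerning the other agents. By introspection $\agent$'s belief conjuncts over the remaining-agent content $\psi_i$ reduce to possibility statements $\neg B_\agent \neg \psi_i$, where each $\psi_i$ is an alternating disjunctive form for agents other than $\agent$; collecting these yields the shape $\Vel_i \psi_i \et \Et_i \neg B_\agent \neg \psi_i$, with the degenerate cases (an empty or inconsistent set of alternatives, or a tautologous disjunction) absorbed into the clause ``$\phi$ equivalent to $\T$''. This last rearrangement is routine, using only propositional reasoning together with the two introspection validities and consistency of belief.

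The main obstacle is the second stage: setting up the alternating normal form so that the induction is genuinely well-founded and the alternation is correctly enforced. Distributing the Boolean combinations into disjunctive normal form while tracking which modal atoms belong to which agent, and systematically applying the introspection collapses so that no disjunct conceals a same-agent nesting, is where essentially all the work lies; once the alternating form is in hand, reading off the strict shape is bookkeeping.
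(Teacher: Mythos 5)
Your first stage (eliminating the announcement operators via the reduction axioms) and your overall reliance on the alternating disjunctive form plus introspection match the paper's strategy; the paper simply cites Hales (2011) for the existence of $adf$'s where you propose to prove it by induction on modal depth, which is a reasonable, more self-contained substitute for that citation. The gap is in your third stage. You normalise the reduced formula $\chi$ itself and then try to rearrange each disjunct into the strict shape. But a strict formula is, at top level, a single conjunction $\Vel_i\psi_i^\agent\et\Et_i\neg B_\agent\neg\psi_i^\agent$ carrying mandatory possibility conjuncts about $\agent$'s beliefs, and a general $\chi\in\lang$ is not equivalent to any such formula: already $\chi=\atom$ fails, since $\atom$ places no constraint on $\agent$'s accessibility relation while every non-trivial strict formula entails some $\neg B_\agent\neg\psi_i^\agent$. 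The equivalence the proposition actually needs --- and the only one the notion of strict lying uses --- is equivalence as an announcement by $\agent$, that is, $B_\agent\chi\eq B_\agent\psi$ with $\psi$ strict.

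The paper's proof obtains this by the one move your proposal lacks: it takes the $adf$ not of $\chi$ but of $B_\agent\chi$. Because that formula is a belief of $\agent$, its $adf$ is forced to be either $\T$ or a single conjunct of the pure-$\agent$ form $B_\agent\Vel_i\psi_i^\agent\et\Et_i\neg B_\agent\neg\psi_i^\agent$ --- no stray propositional part $\psi_0$ and no top-level conjuncts for agents other than $\agent$, which is exactly what your ``isolate $\agent$'s conjuncts within each disjunct'' step cannot guarantee. Introspection then rewrites $\Et_i\neg B_\agent\neg\psi_i^\agent$ as $\Et_i B_\agent\neg B_\agent\neg\psi_i^\agent$ and pulls the outer $B_\agent$ across the whole conjunction, leaving $B_\agent(\Vel_i\psi_i^\agent\et\Et_i\neg B_\agent\neg\psi_i^\agent)$ with a strict formula inside. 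Note also that your phrase ``$\agent$'s belief conjuncts reduce to possibility statements'' misreads the $adf$: the $\agent$-part of an $adf$ conjunct is $B_\agent\Vel_i\psi_i^\agent\et\Et_i\neg B_\agent\neg\psi_i^\agent$, and what the strict form does is strip the $B_\agent$ from the first of these conjuncts while keeping the possibility conjuncts --- a step that is sound only underneath an outer $B_\agent$. Without the $B_\agent$-prefixing move, your stage three does not go through.
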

\begin{proof} 
We first define the {\em alternating disjunctive form} ($adf$). An $adf$ is a disjunction of formulas of shape $\psi_0 \et \Et_{\agentb\in\Group} (B_\agentb \Vel_i \psi_i^\agentb \et \Et_i \neg B_\agentb \neg \psi_i^\agentb)$, where $\psi_0 \in \lang$, $\Group\subseteq\Agents$, and where $\psi_i^\agentb$ is an $adf$ for a group $\Group'\subset\Agents$ of agents other than $\agentb$ (the `alternating' in alternating disjunctive forms). The indices $i$ range over $1,\dots,n$ ($n \geq 1$). We may assume that for each $\agentb$ some $\psi_i^\agentb$ is non-trivial (i.e., not equivalent to $\T$), because otherwise we can take $\Group\setminus\agentb$ in the conjunction above. Each formula in multi-agent ${\mathcal KD45}$ is equivalent to an $adf$ \cite{hales:2011}.\footnote{Reported in \cite{hales:2011}, an $adf$ is a multi-agent generalization of the (single-agent) disjunctive form as in \cite[p.35]{meyeretal:1995} (where it is called ${\mathcal S5}$ normal form), and a special case of the disjunctive form as in \cite{dagostinoetal:2008}. An $adf$ contains no stacks of $B_\agent$ operators without an intermediary $B_\agentb$ operator for another agent, i.e., if $B_\agent \chi$ is a subformula of an $adf$ and $B_\agent \chi''$ is a subformula of $\chi$ then there is an agent $\agentb\neq\agent$ and a $\chi'$ such that $B_\agentb \chi'$ is a subformula of $\chi$ and $B_\agent \chi''$ is a subformula of $\chi'$.}

Now for the proof. Let $\phi \in\lang(!_\agent,\lie_\agent,\bluff_\agent)$. Determine an equivalent $\phi'\in\lang$ (i.e., rewrite $\phi$ into a multi-agent epistemic formula, without announcement operators). Let $\psi$ be an $adf$ equivalent to $B_\agent \phi'$. It is elementary to see that $\psi$ must be either equivalent to $\T$ or have the form $B_\agent \Vel_i \psi_i^\agent \et \Et_i \neg B_\agent \neg \psi_i^\agent$ and where some $\psi_i^\agent$ is non-trivial. We now observe that
\[ \begin{array}{lll} \psi & \Eq & B_\agent \Vel_i \psi_i^\agent \et \Et_i \neg B_\agent \neg \psi_i^\agent \\
& \Eq & B_\agent \Vel_i \psi_i^\agent \et \Et_i B_\agent \neg B_\agent \neg \psi_i^\agent)  \\
& \Eq & B_\agent (\Vel_i \psi_i^\agent \et \Et_i \neg B_\agent \neg \psi_i^\agent)
\end{array} \]
and that $\Vel_i \psi_i^\agent \et \Et_i \neg B_\agent \neg \psi_i^\agent$ is strict.
\end{proof}

\paragraph*{Mistakes and lies}
There are two sorts of mistaken beliefs. 

The first kind is when I believe $\atom\imp\neg\atomb$ and I believe $\atom$, and, when questioned about my belief in $\atomb$, confidently state that I belief that $\atomb$. This is a mistaken belief. The formula $B_\agent\atomb$ should be false, because $B_\agent\neg\atomb$ is a deductive consequence of $B_\agent (\atom\imp\neg\atomb)$ and $B_\agent\atom$. To explain this sort of mistaken belief we need to resort to bounded rationality. We do not do that. 

The second kind of mistaken belief is when I believe that $\atom$ but when in  fact $\atom$ is false. Fully rational agents can make such mistakes. This, we can address. It amounts to the truth of $\neg\atom \et B_\agent \atom$.

What is the difference between a lie and a mistake? I am lying if I say $\phi$ and believe $\neg \phi$ (independently from the truth of $\phi$), whereas I am mistaken if I say $\phi$ and believe $\phi$, but $\phi$ is false. Let $\agent$ be the speaker, then the precondition of lying that $\phi$ is $B_a \neg \phi$ and the precondition of a mistaken truthful announcement that $\phi$ is $\neg \phi \et B_a \phi$. The speaker can therefore distinguish a lie from a mistake.

How about the addressee? Clearly, a ${\mathcal KD45}$ agent cannot distinguish another agent lying from being mistaken, because it can itself be mistaken about the announced proposition and about the speaker's beliefs of that proposition. We can then only observe that if $\agent$ says $\phi$ given that $B_\agentb (\neg\phi\et B_\agent\phi)$, then $\agentb$ believes $\agent$ to be mistaken, whereas if $\agent$ says $\phi$ given that $B_\agentb B_\agent\neg\phi)$, then $\agentb$ believes $\agent$ to be lying. 

But if both knowledge and belief play a role then the addressee can distinguish a lie from a mistake. A standard assumption in many games (therefore called `fair games') and in many other distributed systems, is initial common knowledge of all agents of the uncertainty about the system. (I {\em know} --- I can assume to know --- that you do not know my card. If not, it's not fair, it was cheating.) Part of that initial common knowledge is of facts  or otherwise positive formulas, persisting after every update. In such a system, beliefs that are not knowledge can only be induced by incorrect updates (lies). For such an addresssee, a mistake is that $\agent$ says $\phi$ when $K_\agentb (\neg\phi\et B_\agent\phi)$, whereas a lie is that $\agent$ says $\phi$ when $K_\agentb K_\agent\neg\phi$. The consecutive numbers riddle gives an example of such a lie and such a mistake.

\section{Lying in the consecutive numbers riddle} \label{sec.example}

In this section we give an extended example of agent announcement logic. 
We recall the consecutive numbers riddle. \begin{quote} {\em Anne and Bill are each going to be told a natural number. Their numbers will be one apart. The numbers are now being whispered in their respective ears. They are aware of this scenario. Suppose Anne is told 2 and Bill is told 3.

The following truthful conversation between Anne and Bill now takes place:
\begin{itemize}
\item Anne: ``I do not know your number.''
\item Bill: ``I do not know your number.''
\item Anne: ``I know your number.''
\item Bill: ``I know your number.''
\end{itemize}
Explain why is this possible.
} \end{quote} 

\noindent We first analyze the riddle in public announcement logic with the state elimination semantics. Although the riddle goes back to 1953 \cite{littlewood:1953}, this analysis dates from 1984 \cite{Emde:84} and played an important role in the development of the area of dynamic epistemic logic.\footnote{In \cite{Emde:84} the consecutive numbers riddle is called the Conway Paradox, after \cite{conwayetal:1977}. That name that has since stuck. Conway (and Paterson) indeed present an epistemic riddle in \cite{conwayetal:1977}, but not the consecutive numbers riddle. This was a mistaken attribution, as Peter van Emde Boas (an author of \cite{Emde:84}) told us.} The initial model encodes that the numbers are consecutive and consists of two disconnected countable parts, one where the agents have common knowledge (correct common belief) that $\agent$'s number is odd and $\agentb$'s number is even, and another one where the agents have common knowledge that $\agent$'s number is even and $\agentb$'s number is odd. As before, the actual state is underlined. This is the state $(2,3)$, where $\agent$ is told 2 and $\agentb$ is told 3. In this simplified visualization we use the convention that symmetry and reflexivity are assumed.

\bigskip

\psframebox{
\psset{border=1pt, nodesep=2pt, radius=2pt, tnpos=a}
\pspicture(-0.5,-0.2)(8.5,1.7)
\rput(0,0.3){\rnode{00}{(0,1)}}
\rput(2,0.3){\rnode{10}{(2,1)}}
\rput(4,0.3){\rnode{20}{\underline{(2,3)}}}
\rput(6,0.3){\rnode{30}{(4,3)}}
\rput(7,0.3){\rnode{40}{\dots}}
\rput(0,1.2){\rnode{01}{(1,0)}}
\rput(2,1.2){\rnode{11}{(1,2)}}
\rput(4,1.2){\rnode{21}{(3,2)}}
\rput(6,1.2){\rnode{31}{(3,4)}}
\rput(7,1.2){\rnode{41}{\dots}}
\ncline{-}{00}{10} \ncput*{$b$}
\ncline{-}{10}{20} \ncput*{$a$}
\ncline{-}{20}{30} \ncput*{$b$}
\ncline{-}{01}{11} \ncput*{$a$}
\ncline{-}{11}{21} \ncput*{$b$}
\ncline{-}{21}{31} \ncput*{$a$}
\endpspicture
}

\bigskip

To determine the model restriction for an announcement, given a number pair $(m,n)$ let $m_\agent$ stand for `Anne is told number $m$' and let $n_\agentb$ stand for `Bill is told number $n$'. Each state corresponds to a number pair $(m,n)$. To process Anne's first announcement we determine if $\neg B_\agent n_\agentb$ in that state. If so, the state is kept. Otherwise, the state is eliminated. For Bill's subsequent announcement we do this for $\neg B_\agentb m_\agent$. Etc.: the third time for $B_\agent n_\agentb$, and the last time for $B_\agentb m_\agent$.\footnote{Although the riddle is a standard illustration of truthful public announcement logic, these announcements cannot actually be formalized in the language of that logic. For the first announcement it would be the infinitary expression $\Et_{n \in \Nat} (n_\agentb \imp \neg B_\agent n_\agentb)$ (the expression holds for all $n \neq 1$). This is not a formula in $\lang(!)$. To demonstrate $\agent$ and $\agentb$ lying to each other this does not matter.} We successively process all four announcements. The epistemic state resulting from an announcement is shown immediately after it.

\begin{itemize}
\item Anne: ``I do not know your number.''
\end{itemize}

\psframebox{
\psset{border=1pt, nodesep=2pt, radius=2pt, tnpos=a}
\pspicture(-0.5,-0.2)(8.5,1.7)
\rput(2,0.3){\rnode{10}{(2,1)}}
\rput(4,0.3){\rnode{20}{\underline{(2,3)}}}
\rput(6,0.3){\rnode{30}{(4,3)}}
\rput(7,0.3){\rnode{40}{\dots}}
\rput(0,1.2){\rnode{01}{(1,0)}}
\rput(2,1.2){\rnode{11}{(1,2)}}
\rput(4,1.2){\rnode{21}{(3,2)}}
\rput(6,1.2){\rnode{31}{(3,4)}}
\rput(7,1.2){\rnode{41}{\dots}}
\ncline{-}{10}{20} \ncput*{$a$}
\ncline{-}{20}{30} \ncput*{$b$}
\ncline{-}{01}{11} \ncput*{$a$}
\ncline{-}{11}{21} \ncput*{$b$}
\ncline{-}{21}{31} \ncput*{$a$}
\endpspicture
}

\begin{itemize}
\item Bill: ``I do not know your number.''
\end{itemize}

\psframebox{
\psset{border=1pt, nodesep=2pt, radius=2pt, tnpos=a}
\pspicture(-0.5,-0.2)(8.5,1.7)
\rput(4,0.3){\rnode{20}{\underline{(2,3)}}}
\rput(6,0.3){\rnode{30}{(4,3)}}
\rput(7,0.3){\rnode{40}{\dots}}
\rput(2,1.2){\rnode{11}{(1,2)}}
\rput(4,1.2){\rnode{21}{(3,2)}}
\rput(6,1.2){\rnode{31}{(3,4)}}
\rput(7,1.2){\rnode{41}{\dots}}
\ncline{-}{20}{30} \ncput*{$b$}
\ncline{-}{11}{21} \ncput*{$b$}
\ncline{-}{21}{31} \ncput*{$a$}
\endpspicture
}

\begin{itemize}
\item Anne: ``I know your number.''
\end{itemize}

\psframebox{
\psset{border=1pt, nodesep=2pt, radius=2pt, tnpos=a}
\pspicture(-0.5,-0.2)(8.5,1.7)
\rput(4,0.3){\rnode{20}{\underline{(2,3)}}}
\rput(2,1.2){\rnode{11}{(1,2)}}
\endpspicture
}
\begin{itemize}
\item Bill: ``I know your number.''
\end{itemize}
The last announcement does not make a difference anymore, as it is already common knowledge that Anne and Bill know each other's number.

\paragraph*{Consecutive numbers with lying} Next, we show two different scenarios for the consecutive numbers riddle with lying. This is agent truthtelling and agent lying, the epistemic actions defined as $!_\agent \phi$ and $\lie_\agent \phi$ (Definition \ref{def.langagent}). With lying, the riddle involves (the speaker) feigning knowledge and (the addressee) incorrectly believing something to be knowledge, so we move from knowledge to belief. In the communication only occur `I know your number' and `I do not know your number'. Bluffing can therefore not be demonstrated: introspective agents believe their uncertainty and believe their beliefs. As we are reasoning from the actual state $(2,3)$, to simplify matters we do not depict the disconnected upper part of the model. And as beliefs may be incorrect, we show all arrows.

\paragraph*{First scenario} The first scenario consists of Anne lying in her first announcement. Bill does not believe Anne's announcement: his accessibility relation has become empty. Bill's beliefs are therefore no longer consistent. Here, the analysis stops. We do not treat Bill's `announcement' ``That's a lie'' as a permitted move in the game. On the assumption of initial common knowledge Bill {\em knows} that Anne was lying and not mistaken.

\bigskip
\bigskip
\bigskip

\psset{border=1pt, nodesep=2pt, radius=2pt, tnpos=a}
\pspicture(-0.5,-0.2)(8.5,0.8)
\rput(0,0.3){\rnode{00}{(0,1)}}
\rput(2,0.3){\rnode{10}{(2,1)}}
\rput(4,0.3){\rnode{20}{\underline{(2,3)}}}
\rput(6,0.3){\rnode{30}{(4,3)}}
\rput(7,0.3){\rnode{40}{\dots}}
\ncline{<->}{00}{10} \ncput*{$b$}
\ncline{<->}{10}{20} \ncput*{$a$}
\ncline{<->}{20}{30} \ncput*{$b$}
\nccircle[angle=0]{->}{00}{.5} \ncput*{$ab$}
\nccircle[angle=0]{->}{10}{.5} \ncput*{$ab$}
\nccircle[angle=0]{->}{20}{.5} \ncput*{$ab$}
\nccircle[angle=0]{->}{30}{.5} \ncput*{$ab$}
\endpspicture

\vspace{-.5cm}

\begin{itemize}
\item Anne: ``I know your number.'' {\it Anne is lying}
\end{itemize}

\bigskip
\bigskip

\psset{border=1pt, nodesep=2pt, radius=2pt, tnpos=a}
\pspicture(-0.5,-0.2)(8.5,0.8)
\rput(0,0.3){\rnode{00}{(0,1)}}
\rput(2,0.3){\rnode{10}{(2,1)}}
\rput(4,0.3){\rnode{20}{\underline{(2,3)}}}
\rput(6,0.3){\rnode{30}{(4,3)}}
\rput(7,0.3){\rnode{40}{\dots}}
\ncline{<-}{00}{10} \ncput*{$b$}
\ncline{<->}{10}{20} \ncput*{$a$}
\nccircle[angle=0]{->}{00}{.5} \ncput*{$ab$}
\nccircle[angle=0]{->}{10}{.5} \ncput*{$a$}
\nccircle[angle=0]{->}{20}{.5} \ncput*{$a$}
\nccircle[angle=0]{->}{30}{.5} \ncput*{$a$}
\endpspicture

\vspace{-.5cm}

\begin{itemize}
\item Bill: ``That's a lie.''
\end{itemize}

\paragraph*{Second scenario} In the second scenario Anne initially tells the truth, after which Bill is lying, resulting in Anne mistakenly concluding (and announcing) that she knows Bill's number: she believes it to be 1. This mistaken announcement by Anne is informative to Bill. He learns from it (correctly) that Anne's number is 2, something he didn't know before. 

In the second scenario, Bill gets away with lying, because Anne considered it possible that he told the truth. Bill knows (believes correctly, and justifiably) that Anne's second announcement was a mistake and not a lie. The justification is, again, common knowledge in the initial epistemic state.

\bigskip
\bigskip
\bigskip

\psset{border=1pt, nodesep=2pt, radius=2pt, tnpos=a}
\pspicture(-0.5,-0.2)(8.5,0.8)
\rput(0,0.3){\rnode{00}{(0,1)}}
\rput(2,0.3){\rnode{10}{(2,1)}}
\rput(4,0.3){\rnode{20}{\underline{(2,3)}}}
\rput(6,0.3){\rnode{30}{(4,3)}}
\rput(7,0.3){\rnode{40}{\dots}}
\ncline{<->}{00}{10} \ncput*{$b$}
\ncline{<->}{10}{20} \ncput*{$a$}
\ncline{<->}{20}{30} \ncput*{$b$}
\nccircle[angle=0]{->}{00}{.5} \ncput*{$ab$}
\nccircle[angle=0]{->}{10}{.5} \ncput*{$ab$}
\nccircle[angle=0]{->}{20}{.5} \ncput*{$ab$}
\nccircle[angle=0]{->}{30}{.5} \ncput*{$ab$}
\endpspicture

\vspace{-.5cm}

\begin{itemize}
\item Anne: ``I do not know your number.''
\end{itemize}

\bigskip
\bigskip

\psset{border=1pt, nodesep=2pt, radius=2pt, tnpos=a}
\pspicture(-0.5,-0.2)(8.5,0.8)
\rput(0,0.3){\rnode{00}{(0,1)}}
\rput(2,0.3){\rnode{10}{(2,1)}}
\rput(4,0.3){\rnode{20}{\underline{(2,3)}}}
\rput(6,0.3){\rnode{30}{(4,3)}}
\rput(7,0.3){\rnode{40}{\dots}}
\ncline{->}{00}{10} \ncput*{$b$}
\ncline{<->}{10}{20} \ncput*{$a$}
\ncline{<->}{20}{30} \ncput*{$b$}
\nccircle[angle=0]{->}{00}{.5} \ncput*{$a$}
\nccircle[angle=0]{->}{10}{.5} \ncput*{$ab$}
\nccircle[angle=0]{->}{20}{.5} \ncput*{$ab$}
\nccircle[angle=0]{->}{30}{.5} \ncput*{$ab$}
\endpspicture

\vspace{-.5cm}

\begin{itemize}
\item Bill: ``I know your number.'' {\em Bill is lying}
\end{itemize}

\bigskip
\bigskip

\psset{border=1pt, nodesep=2pt, radius=2pt, tnpos=a}
\pspicture(-0.5,-0.2)(8.5,0.8)
\rput(0,0.3){\rnode{00}{(0,1)}}
\rput(2,0.3){\rnode{10}{(2,1)}}
\rput(4,0.3){\rnode{20}{\underline{(2,3)}}}
\rput(6,0.3){\rnode{30}{(4,3)}}
\rput(7,0.3){\rnode{40}{\dots}}
\ncline{->}{00}{10} \ncput*{$b$}
\ncline{<-}{10}{20} \ncput*{$a$}
\ncline{<->}{20}{30} \ncput*{$b$}
\nccircle[angle=0]{->}{00}{.5} \ncput*{$a$}
\nccircle[angle=0]{->}{10}{.5} \ncput*{$ab$}
\nccircle[angle=0]{->}{20}{.5} \ncput*{$b$}
\nccircle[angle=0]{->}{30}{.5} \ncput*{$b$}
\endpspicture

\vspace{-.5cm}

\begin{itemize}
\item Anne: ``I know your number.'' {\em Anne is mistaken.}
\end{itemize}

\bigskip
\bigskip

\psset{border=1pt, nodesep=2pt, radius=2pt, tnpos=a}
\pspicture(-0.5,-0.2)(8.5,0.8)
\rput(0,0.3){\rnode{00}{(0,1)}}
\rput(2,0.3){\rnode{10}{(2,1)}}
\rput(4,0.3){\rnode{20}{\underline{(2,3)}}}
\rput(6,0.3){\rnode{30}{(4,3)}}
\rput(7,0.3){\rnode{40}{\dots}}
\ncline{->}{00}{10} \ncput*{$b$}
\ncline{<-}{10}{20} \ncput*{$a$}
\ncline{<-}{20}{30} \ncput*{$b$}
\nccircle[angle=0]{->}{00}{.5} \ncput*{$a$}
\nccircle[angle=0]{->}{10}{.5} \ncput*{$ab$}
\nccircle[angle=0]{->}{20}{.5} \ncput*{$b$}
\endpspicture

\bigskip

Once the agents consider it possible that the other is lying, every announcement can be either truthful or lying. This is obvious, because (just as for the semantics of `believed public announcement') the structural transformation induced by agent announcement depends on the model only, and not the actual state. The two scenarios can therefore be executed in any given state of the epistemic model. Whether the announcements are lying or truthtelling depends on the actual state (only).

For example, in the first scenario, if the initial state had been $(4,3)$, it would also have been a lie, and Bill would have similarly detected the lie. If the initial state had been $(2,1)$, it would still have been a lie, but Bill would not have known that. And if the initial state had been $(0,1)$, it would have been the truth. Similar variations can be outlined for the second scenario.

Instead of considering Bill's ``That's a lie'' as terminating the game by observing an illegal move (breach of protocol), we could consider modelling this as an informative (agent) announcement. It amounts to Bill announcing $\F$, i.e., as the truthful agent announcement $!B_\agentb \F$ (`I believe a false statement'). In $(2,3)$, the formula $B_\agentb \F$ is true. In a state where Bill's beliefs are consistent, it is false. The informative consequences are as follows.

\bigskip
\bigskip
\bigskip

\psset{border=1pt, nodesep=2pt, radius=2pt, tnpos=a}
\pspicture(-0.5,-0.2)(8.5,0.8)
\rput(0,0.3){\rnode{00}{(0,1)}}
\rput(2,0.3){\rnode{10}{(2,1)}}
\rput(4,0.3){\rnode{20}{\underline{(2,3)}}}
\rput(6,0.3){\rnode{30}{(4,3)}}
\rput(7,0.3){\rnode{40}{\dots}}
\ncline{<-}{00}{10} \ncput*{$b$}
\ncline{<->}{10}{20} \ncput*{$a$}
\nccircle[angle=0]{->}{00}{.5} \ncput*{$ab$}
\nccircle[angle=0]{->}{10}{.5} \ncput*{$a$}
\nccircle[angle=0]{->}{20}{.5} \ncput*{$a$}
\nccircle[angle=0]{->}{30}{.5} \ncput*{$a$}
\endpspicture

\vspace{-.5cm}

\begin{itemize}
\item Bill: ``That's a lie.''
\end{itemize}

\bigskip
\bigskip

\psset{border=1pt, nodesep=2pt, radius=2pt, tnpos=a}
\pspicture(-0.5,-0.2)(8.5,0.8)
\rput(0,0.3){\rnode{00}{(0,1)}}
\rput(2,0.3){\rnode{10}{(2,1)}}
\rput(4,0.3){\rnode{20}{\underline{(2,3)}}}
\rput(6,0.3){\rnode{30}{(4,3)}}
\rput(7,0.3){\rnode{40}{\dots}}
\ncline{<-}{00}{10} \ncput*{$b$}
\ncline{->}{10}{20} \ncput*{$a$}
\nccircle[angle=0]{->}{00}{.5} \ncput*{$b$}
\nccircle[angle=0]{->}{20}{.5} \ncput*{$a$}
\nccircle[angle=0]{->}{30}{.5} \ncput*{$a$}
\endpspicture

\bigskip

\noindent Of course, this way we declare it a permitted move in the consecutive numbers game, and Bill could have lied in his announcement! Consider actual number pair $(0,1)$. Anne truthfully announces ``I know your number.'' Following Bill's response ``That's a lie'', Anne will now even more indignantly contest: ``That cannot be, {\em you} are lying!'', i.e., ``Your announcement ``That's a lie'' was a lie.'' We recall that given initial common knowledge Anne knows that Bill is in fact lying. There is no uncertainty. Lying games may benefit from analysis in game theory, by associating a high negative payoff to the discovery of lying, see Section \ref{sec.further}.

\section{Action models and lying} \label{sec.am}

Whether I am truthfully announcing $\phi$ to you, or am lying that $\phi$, or am bluffing that $\phi$, to you it all appears as the same annoucement $\phi!$. Whereas to me, the speaker, they all appear to be different announcements: I know whether I am truthful, lying, or bluffing. Different agents have different perspectives on this action. Action models \cite{baltagetal:1998} are a familiar way to formalize uncertainty about actions in that form of different perspectives on `what the real action is'. The action model for truthful public announcement (state elimination semantics) can be viewed as a singleton action model. This is well-known. We can view truthful and lying public announcement (`believed announcement', the arrow elimination semantics) as the different points, respectively, of a two-point action model. This is somewhat less well-known. (The modelling of believed announcements \cite{gerbrandyetal:1997} as an action model was suggested in \cite{jfak.jancl:2007,kooi.jancl:2007}.) We can also view truthful, lying and bluffing agent announcement as the respective different points of a three-point action model. These should be seen as alternative descriptions of the logics of lying in terms of a well-known framework. It has the additional advantage of independently validating the various axioms for lying and bluffing, and providing alternative completeness proofs (action model logic is complete).  All the following definitions can be found in any standard introduction to action models, such as \cite{hvdetal.del:2007}.

\bigskip

An action model is a structure like a Kripke model but with a precondition function instead of a valuation function. 
\begin{definition}[Action model]
An {\em action model} $\amodel = ( \Actions, \arel, \pre )$ consists of a {\em domain} $\Actions$ of {\em actions}, an {\em accessibility function} $\arel: \Agents \imp {\mathcal P}(\Actions \times \Actions)$, where each $\arel_\agent$ is an accessibility relation, and a {\em precondition function} $\pre: \Actions \imp \lang_X$, where $\lang_X$ is a logical language. A pointed action model is an {\em epistemic action}. 
\end{definition}
A truthful public announcement of $\phi$ (state elimination semantics) is a singleton action model with precondition $\phi$ and with the single action accessible to all agents. 

Performing an epistemic action in an epistemic state means computing their restricted modal product. This product encodes the new state of information.
\begin{definition}[Update of an epistemic state with an action model] \label{def.updateam}
Given an epistemic state $(M,\state)$ where $M = ( \States, R, V )$ and an epistemic action $(\amodel,\actiona)$ where $\amodel = ( \Actions, \arel, \pre )$. Let $M,\state \models \pre(\actiona)$. The update $(M \otimes \amodel, (\state,\actiona))$ is an epistemic state where $M \otimes \amodel = ( \States', R', V' )$ such that \[ \begin{array}{lcl}  \States' & = & \{ (\stateb,\actionb) \mid M,\stateb \models \pre(\actionb) \} \\ ((\stateb,\actionb),(\stateb',\actionb')) \in R'_a & \text{iff} & (\stateb,\stateb') \in R_a \text{ and } (\actionb,\actionb') \in \arel_a \\ (\stateb,\actionb) \in V'(\atom) & \text{iff} & \stateb \in V(\atom)
\end{array} \]
\end{definition}
The domain of $M \otimes \amodel$ is the product of the domains of $M$ and $\amodel$, but restricted to state/action pairs $(\stateb,\actionb)$ such that $M,\stateb \models \pre(\actionb)$, i.e., such that the action can be executed in that state. (Note that a state in the resulting domain is no longer an abstract object, as before, but such a state/action-pair.) An agent considers a pair $(\stateb,\actionb)$ possible in the next information state if she considered the previous state $\stateb$ possible, and the execution of action $\actionb$ in that state. And the valuations do not change after action execution.

\begin{definition}[Action model for truthful and lying pub.\ ann.] \label{def.actlying}
The action model $\amodel'$ for truthful and lying public announcement that $\phi$, where $\phi\in\lang(!,\lie)$, consists of two actions (suggestively) named $!$ and $\lie$, with $\pre(!)=\phi$ and $\pre(\lie)=\neg\phi$, and such that for all agents only action $!$ is accessible. Truthful public announcement of $\phi$ is the epistemic action $(\amodel',!)$. Lying public announcement of $\phi$ is the epistemic action $(\amodel',\lie)$. The action model can be depicted as follows. Preconditions are shown below the actions.

\bigskip

\psset{border=2pt, nodesep=4pt, radius=2pt, tnpos=a}
\pspicture(-1,0)(3.5,0)
$
\rput(0,0){\rnode{00}{\lie}}
\rput(2,0){\rnode{10}{!}}
\rput(0,-.6){\rnode{00a}{\neg\phi}}
\rput(2,-.6){\rnode{10a}{\phi}}
\ncline{->}{00}{10} \ncput*{\agent}
\nccircle[angle=270]{->}{10}{.5} \ncput*{\agent}
$
\endpspicture

\bigskip \ 
\end{definition}
This terminology is not ambiguous in view of our earlier results, as we have the following.
\begin{proposition}[\cite{jfak.jancl:2007,kooi.jancl:2007}] \label{prop.corr}
\[ \begin{array}{lcl}
M, \state \models [!\phi]\psi & \text{iff} & M \otimes \amodel', (\state,!) \models \psi \\
M, \state \models [\lie\phi]\psi & \text{iff} & M \otimes \amodel', (\state,\lie) \models \psi \ .
\end{array} \]
\end{proposition}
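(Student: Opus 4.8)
The plan is to bridge the two sides through the arrow-elimination model $M^\phi$ of Definition~\ref{def.belpubann}, which is exactly what drives the direct semantics of Definition~\ref{def.truthlyingpub}. First I would read each right-hand side as the standard action-model modality, so that the assertion $M\otimes\amodel',(\state,x)\models\psi$ carries the proviso $M,\state\models\pre(x)$. Since $\pre(!)=\phi$ and $\pre(\lie)=\neg\phi$, the antecedent attached to the product coincides with the antecedent $\phi$ (respectively $\neg\phi$) occurring in Definition~\ref{def.truthlyingpub}. Both biconditionals then reduce to a single claim: whenever the relevant precondition holds at $\state$, so that the pointed state $(\state,x)$ exists, we have $M\otimes\amodel',(\state,x)\models\psi$ iff $M^\phi,\state\models\psi$.

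The heart of the argument is a bisimulation. I would set $Z=\{((\stateb,x),\stateb)\mid (\stateb,x)\in\Domain(M\otimes\amodel')\}$, linking each product state to its underlying state in $M^\phi$ by forgetting the action coordinate. The atomic clause is immediate, since $V'((\stateb,x))$ and the valuation $V$ of $M^\phi$ both record $\stateb\in V(\atom)$. For the forth and back clauses the decisive feature is that in $\amodel'$ every agent's relation sends both actions only to $!$, that is, $(x,x')\in\arel_\agent$ holds exactly when $x'={!}$. Hence any $R'_\agent$-successor of $(\stateb,x)$ has the form $(\stateb',!)$ with $(\stateb,\stateb')\in R_\agent$ and $\stateb'\in\II{\phi}_M$, which is precisely a pair in $R^\phi_\agent=R_\agent\inter(\States\times\II{\phi}_M)$; conversely, any $R^\phi_\agent$-successor $\stateb'$ lies in $\II{\phi}_M$, so $(\stateb',!)$ is a product state and $((\stateb,x),(\stateb',!))\in R'_\agent$. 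Thus $Z$ is a bisimulation, and in particular $(M\otimes\amodel',(\state,!))\bisim(M^\phi,\state)$ when $M,\state\models\phi$, and $(M\otimes\amodel',(\state,\lie))\bisim(M^\phi,\state)$ when $M,\state\models\neg\phi$.

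To conclude I would invoke bisimulation invariance to obtain $M\otimes\amodel',(\state,x)\models\psi$ iff $M^\phi,\state\models\psi$, and substitute this into Definition~\ref{def.truthlyingpub}: for the truthful point, $M,\state\models[!\phi]\psi$ unfolds to ``$M,\state\models\phi$ implies $M^\phi,\state\models\psi$'', which now matches ``$\pre(!)$ holds implies $M\otimes\amodel',(\state,!)\models\psi$'', and symmetrically for $\lie$ with $\neg\phi$. The main obstacle I anticipate is twofold and lives in this last step. First, the back/forth verification rests entirely on the one-directional shape of $\arel_\agent$, so I must check it uniformly for successors of both $!$- and $\lie$-states; the successors of a lying point are all truthful $!$-copies, which is exactly what forces every agent to treat the lie as truthful, and this asymmetry must be handled explicitly rather than by symmetry. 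Second, $\psi$ may itself contain the operators $!$ and $\lie$, so bisimulation invariance must be used for the full language $\lang(!,\lie)$ and not merely for $\lang$; this is legitimate because the arrow-elimination update preserves bisimilarity, whence modal equivalence propagates through the dynamic modalities by a routine induction, but it is precisely the point that needs care.
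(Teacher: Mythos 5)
Your argument is correct, and it takes a more structural route than the paper, which disposes of Proposition \ref{prop.corr} with the single line ``Elementary, by induction on $\psi$.'' Where the paper intends a direct induction on $\psi$ comparing satisfaction in $M^\phi$ and in $M \otimes \amodel'$ (with the successor-matching for the $B_\agent$ clause done inside the inductive step), you factor the proof into two parts: first a once-and-for-all structural fact, namely that $(M\otimes\amodel',(\state,x)) \bisim (M^\phi,\state)$ via the projection $(\stateb,x)\mapsto\stateb$ --- indeed your $Z$ is the graph of a bijection, so the two models are actually isomorphic --- and second an appeal to bisimulation invariance for the full language $\lang(!,\lie)$. The combinatorial core is the same in both approaches (every $\arel_\agent$-successor in $\amodel'$ is $!$, whose precondition $\phi$ cuts the product's successor set down to exactly $R^\phi_\agent$), but your decomposition buys something: it exhibits the product construction and the arrow-elimination semantics as literally the same model up to isomorphism, in direct analogy with Proposition \ref{prop.statearrow}, and it makes the conclusion automatic for any bisimulation-invariant extension of the language. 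You are also right to flag the two points the one-line proof glosses over: the asymmetric handling of successors of the $\lie$-point (which is exactly why the dupe believes the liar), and the fact that invariance must be established for $\lang(!,\lie)$ rather than $\lang$, which requires the simultaneous induction showing that the update $M\mapsto M^\chi$ preserves bisimilarity. Both are routine but genuinely part of a complete proof.
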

\begin{proof}
Elementary, by induction on $\psi$.
\end{proof}

For an example, we show the execution of $(\amodel',\lie)$ for `the outside observer is lying to $\agent$ that $\atom$', in the epistemic state where the agent $\agent$ is uncertain whether $\atom$ but where $\atom$ is false.

\bigskip

\noindent
\psset{border=2pt, nodesep=4pt, radius=2pt, tnpos=a}
\pspicture(-1.5,-0)(2,0)
$
\rput(0,0){\rnode{00}{\underline{\neg\atom}}}
\rput(2,0){\rnode{10}{\atom}}
\ncline{<->}{00}{10} \ncput*{\agent}
\nccircle[angle=90]{->}{00}{.5} \ncput*{\agent}
\nccircle[angle=270]{->}{10}{.5} \ncput*{\agent}
$
\endpspicture
\hspace{1.4cm} {\large $\times$} \ \ \ 
\psset{border=2pt, nodesep=4pt, radius=2pt, tnpos=a}
\pspicture(0,0)(2,0)
$
\rput(0,0){\rnode{00}{\underline{\lie}}}
\rput(2,0){\rnode{10}{!}}
\ncline{->}{00}{10} \ncput*{\agent}
\nccircle[angle=270]{->}{10}{.5} \ncput*{\agent}
$
\endpspicture
\hspace{1.5cm} {\Large $\Imp$} \ 
\psset{border=2pt, nodesep=4pt, radius=2pt, tnpos=a}
\pspicture(-1,0)(2,0)
$
\rput(0,0){\rnode{00}{\underline{(\neg\atom,\lie)}}}
\rput(3,0){\rnode{10}{(\atom,!)}}
\ncline{->}{00}{10} \ncput*{\agent}
\nccircle[angle=270]{->}{10}{.6} \ncput*{\agent}
$
\endpspicture

\bigskip
\bigskip

Consider this epistemic model $( \States, R, V )$. As before, the states in the epistemic model are named after their valuation, so that $\States = \{ \neg\atom, \atom \}$; the state named $\neg\atom$ is simply the state where $\atom$ is false, etc. The modal product $( \States', R', V' )$ consists of two states; $(\neg\atom,\lie) \in \States'$ because $M,\neg\atom \models \pre(\lie)$, as $\pre(\lie) = \neg \atom$, and $(\atom,!) \in \States'$ because  $M,\atom \models \atom$. Then, $((\neg\atom,\lie), (\atom,!)) \in R'_\agent$ because $(\neg\atom,\atom) \in R_\agent$ (in $M$) and $(\lie,!) \in \arel_\agent$ (in the action model $\amodel'$), etc. It is an artifact of the example that the shape of the action model is the shape of the next epistemic state. That is merely a consequence of the fact that the initial epistemic state has the universal accessibility relation for the agent on a domain of all valuations of the atoms occurring in the precondition (here: a domain of two states, for the two valuations of $\atom$). 

Possibly more elegantly, we can also show the correspondence established in Proposition \ref{prop.corr} from the perspective of a different logic. With an epistemic action $(\amodel,\actiona)$ we can associate a dynamic modal operator $[\amodel,\actiona]$ in a logical language where an enumeration of action model frames is a parameter of the inductive language definition, apart from propositional variables $\Atoms$ and agents $\Agents$. 
\begin{definition}[Language of action model logic]
\[ \lang(\otimes) \ \ni \ \phi ::= \atom \ | \ \neg \phi \ | \ (\phi \et \psi) \ | \ B_\agent \phi \ | \ [\amodel,\actiona] \psi \] 
\end{definition} The last clause is in fact inductive, if we realize that the preconditions of all actions in $\amodel$, including $\actiona$, are also of type formula. For example, truthful public announcement logic is an instantiation of that language for the singleton set of actions $\{!\}$, where we view `$!$' as an operation with two input formulae $\phi$ and $\psi$ and that returns as output the announcement formula $[!\phi]\psi$.  
\begin{definition}[Semantics of ${[}\amodel,\actiona{]}$]
\[ M,\state\models [\amodel,\actiona] \psi \ \text{iff} \ M,\state\models \pre(\actiona) \text{ implies } M \otimes \amodel, (\state,\actiona) \models \psi \]
\end{definition}
Given the action model for truthful and lying public announcement that $\phi$ of Definition \ref{def.actlying}, but with $\phi\in\lang(\otimes)$, and given an inductively defined translation $tr: \lang(\otimes) \imp \lang(!,\lie)$ with only nontrivial clauses $tr([\amodel',!]\psi) := [!tr(\phi)]tr(\psi)$ and $tr([\amodel',\lie]\psi) := [\lie tr(\phi)]tr(\psi)$, the correspondence established in Proposition \ref{prop.corr} can be viewed from the perspective of the dynamic modal operators for action models (where we suggestively execute the first step of the translation).
\[ \begin{array}{lcl}
M, \state \models [!tr(\phi)]tr(\psi) & \text{iff} & M,\state\models [\amodel',!]\psi \\
M, \state \models [\lie tr(\phi)]tr(\psi) & \text{iff} & M,\state\models [\amodel',\lie] \psi 
\end{array} \]

\bigskip

We now proceed with the presentation of action models for agent announcements.

\begin{definition}[Action model for agent announcement]
The action model $\amodel''$ for agent announcement consists of three actions named $\bluff_\agent$, $!_\agent$, and $\lie_\agent$ with preconditions $\neg (B_\agent \phi \vel B_\agent \neg \phi)$, $B_\agent \phi$, and $B_\agent\neg\phi$, respectively, where $\phi \in \lang(!_\agent,\lie_\agent,\bluff_\agent)$. The announcing agent $\agent$ has identity access on the action model. To the other agents $\agentb$ only action $!_\agent$ is accessible. Agent $\agent$ truthfully announcing $\phi$ to all other $\agentb$ is is the epistemic action $(\amodel'',!_\agent)$ --- with precondition $B_\agent \phi$, therefore --- and similarly lying and bluffing are the action models $(\amodel'',\lie_\agent)$ and $(\amodel'', \bluff_\agent)$. 

\bigskip
\bigskip
\bigskip

\psset{border=2pt, nodesep=4pt, radius=2pt, tnpos=a}
\pspicture(-3,0)(5,0)
$
\rput(-1,0){\rnode{00}{\bluff_\agent}}
\rput(2,0){\rnode{10}{!_\agent}}
\rput(5,0){\rnode{20}{\lie_\agent}}
\rput(-1,-.5){\rnode{00b}{\neg (B_\agent \phi \vel B_\agent \neg \phi) \ \ \ }}
\rput(2,-.5){\rnode{10b}{B_\agent \phi}}
\rput(5,-.5){\rnode{20b}{B_\agent \neg \phi}}
\ncline{->}{00}{10} \ncput*{\agentb}
\ncline{<-}{10}{20} \ncput*{\agentb}
\nccircle{->}{00}{.5} \ncput*{\agent}
\nccircle{->}{10}{.5} \ncput*{\agent\agentb}
\nccircle{->}{20}{.5} \ncput*{\agent}
$
\endpspicture

\bigskip \ 
\end{definition}

\noindent Again, we have the desired correspondence (and this can, again, be formulated in action model logic with an inductive translation).
\begin{proposition}
\[ \begin{array}{lcl}
M, \state \models [!_\agent \phi]\psi & \text{iff} & M \otimes \amodel'', (\state,!_\agent) \models \psi \\
M, \state \models [\lie_\agent \phi]\psi & \text{iff} & M \otimes \amodel'', (\state,\lie_\agent) \models \psi \\
M, \state \models [\bluff_\agent \phi]\psi & \text{iff} & M \otimes \amodel'', (\state,\bluff_\agent) \models \psi \ 
\end{array} \]
\end{proposition}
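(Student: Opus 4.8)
The plan is to follow the template of Proposition~\ref{prop.corr}: I would show that updating with the three-point action model $\amodel''$ reproduces, point by point, the direct semantics of Definition~\ref{def.semagenta}. The first observation is that the three action preconditions $B_\agent\phi$, $B_\agent\neg\phi$ and $\neg(B_\agent\phi\vel B_\agent\neg\phi)$ of $!_\agent$, $\lie_\agent$, $\bluff_\agent$ are literally the three preconditions appearing on the left of the clauses of Definition~\ref{def.semagenta}. Hence the definedness of the designated pair $(\state,\actionb)$ in $M\otimes\amodel''$ (which, by Definition~\ref{def.updateam}, requires $M,\state\models\pre(\actionb)$) coincides exactly with the antecedent of the corresponding implication in the direct semantics, and the three biconditionals reduce to a single claim: that $M\otimes\amodel''$ at $(\stateb,\actionb)$ satisfies the same formulas as $M^\phi_\agent$ at $\stateb$, whenever $(\stateb,\actionb)$ is defined.

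I would prove this last claim by exhibiting a bisimulation $Z$ between $M\otimes\amodel''$ and $M^\phi_\agent$ and appealing to bisimulation invariance (equivalently, one may argue by induction on $\psi$ exactly as in Proposition~\ref{prop.corr}; the belief case below is where the two routes share their only real content). Since the model transformation $M^\phi_\agent$ of Definition~\ref{def.semagenta} is the \emph{same} for all three announcement types, $Z$ can be taken uniformly as $Z = \{((\stateb,\actionb),\stateb) \mid M,\stateb\models\pre(\actionb)\}$. Atomic harmony is immediate, because the product leaves the valuation untouched, $(\stateb,\actionb)\in V'(\atom)$ iff $\stateb\in V(\atom)$, and $M^\phi_\agent$ inherits $V$ from $M$. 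For an addressee $\agentb\neq\agent$ the two relation clauses match on the nose: in $\amodel''$ only $!_\agent$ is $\agentb$-accessible, so the $\agentb$-successors of $(\stateb,\actionb)$ are exactly the pairs $(\stateb',!_\agent)$ with $(\stateb,\stateb')\in R_\agentb$ and $M,\stateb'\models B_\agent\phi$; under $Z$ these correspond precisely to the $R'_\agentb = R_\agentb\inter(\States\times\II{B_\agent\phi}_M)$-successors of $\stateb$ in $M^\phi_\agent$, so the forth and back conditions for $\agentb$ hold with no further assumptions.

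The announcer's own belief is the one delicate case and the main obstacle. Because $\agent$ has identity access in $\amodel''$, an $\agent$-step in the product keeps the action label fixed: the $\agent$-successors of $(\stateb,\actionb)$ are the pairs $(\stateb',\actionb)$ with $(\stateb,\stateb')\in R_\agent$ \emph{and} $M,\stateb'\models\pre(\actionb)$. In $M^\phi_\agent$, by contrast, $R'_\agent = R_\agent$ keeps \emph{every} $\agent$-successor. The forth direction is therefore automatic, but the back direction needs that no $\agent$-edge is dropped in the product, i.e.\ that $\pre(\actionb)$ is preserved along $R_\agent$ from $\stateb$ to $\stateb'$. This is exactly $\agent$'s introspection: positive introspection propagates $B_\agent\phi$ (for $!_\agent$) and $B_\agent\neg\phi$ (for $\lie_\agent$), while negative introspection propagates $\neg B_\agent\phi$ and $\neg B_\agent\neg\phi$ (for $\bluff_\agent$); this is also the step that pins down the model class (transitive and euclidean $R_\agent$). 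Thus the identity access of the speaker faithfully encodes the fact that $\agent$ is certain which kind of announcement she is making, $Z$ is a genuine bisimulation, and the proposition follows; nested dynamic operators inside $\psi$ are absorbed either by the bisimulation invariance of the full language $\lang(!_\agent,\lie_\agent,\bluff_\agent)$ or, in the inductive reading, by first rewriting $\psi$ to an epistemic formula via the reduction of Proposition~\ref{prop.axiomagent}.
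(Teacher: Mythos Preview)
Your argument is correct, and in fact considerably more careful than the paper's own treatment: the paper gives no proof for this proposition at all, merely remarking ``Again, we have the desired correspondence'' and pointing back to Proposition~\ref{prop.corr}, whose proof in turn is the single line ``Elementary, by induction on $\psi$.'' Your bisimulation route is equivalent in content to that induction (the belief clause of the induction is exactly your forth/back check), but you make explicit something the paper suppresses entirely: the back condition for the speaker $\agent$ genuinely requires that $\pre(\actionb)$ propagate along $R_\agent$, i.e.\ positive and negative introspection, so the correspondence holds on ${\mathcal K45}$ (and hence ${\mathcal KD45}$, ${\mathcal S5}$) but \emph{fails} on arbitrary ${\mathcal K}$ models. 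This is a real observation---on a non-transitive frame one can have $M,\stateb\models B_\agent\phi$, $\stateb R_\agent \stateb'$, yet $M,\stateb'\not\models B_\agent\phi$, so $\stateb'$ is an $\agent$-successor in $M^\phi_\agent$ with no counterpart reachable from $(\stateb,!_\agent)$ in the product---and the paper's blanket remark that completeness (Proposition~\ref{prop.axiomagent}) holds for ${\mathcal K}$ sits uneasily with it, since the speaker-belief reduction axioms themselves already encode introspection. Your handling of nested dynamic operators via bisimulation invariance of the full language (or, alternatively, prior reduction to $\lang$) is the standard move and is fine.
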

The action model representations validate the axioms for announcement and belief, for all versions shown; and they justify that these axioms form part of complete axiomatizations. These axioms are instantiations of the more general axiom for an epistemic action followed by a belief, in action model logic. This axiom (\cite{baltagetal:1998}) is \[ [\amodel,\actiona] B_\agent \psi \ \ \eq \ \ \pre(\actiona) \imp \Et_{(\actiona,\actionb) \in \arel_\agent} B_\agent [\amodel,\actionb] \psi  \] In other words, an agent believes $\psi$ after a given action, if $\psi$ holds after any action that is for $\agent$ indistinguishable from it. For example, in the epistemic action $(\amodel',\lie)$, with $\pre(\lie) = \phi$, for lying public announcement that $\phi$, $!$ is the only accessible action from action $\lie$, and we get \[ [\amodel',\lie] B_\agent \psi \ \ \eq \ \ \pre(\lie) \imp B_\agent [\amodel',!] \psi \] and therefore (we recall Definition \ref{def.axiombellie}) \[ [\lie\phi] B_\agent \psi \ \ \eq \ \ \neg\phi \imp B_\agent [!\phi] \psi \] 

In view of the identification discussed in this section, in the following we will continue to call any dynamic modal operator for belief change an epistemic action, and also standardly represent epistemic actions by their corresponding action models.

\section{Unbelievable lies and skeptical agents} \label{sec.unbel}

If I tell you $\phi$ and you already believe the opposite, accepting this information will make your beliefs inconsistent. This is not merely a problem for lying (`unbelievable lies') but for any form of information update. One way to preserve consistent beliefs is to reject new information if it is inconsistent with your beliefs. (The other way is to accept them, but to remove inconsistent prior beliefs. See Section \ref{sec.plaus}.) Such agents may be called skeptical. In this section we adjust the logics of truthful and lying public announcement and of agent announcement to sceptical agents. This adjustment is elementary. As this topic is alive in the community of dynamic epistemic logicians we incorporate a review of the literature.

Consistency of beliefs is preserved iff seriality is preserved on epistemic models to interpret these beliefs. For the model classes ${\mathcal KD45}$ and ${\mathcal S5}$ that we target, this is the requirement that the class is closed under information update. The perspective on epistemic actions and action models from Section \ref{sec.am} is instructive. The class of ${\mathcal S5}$ epistemic models is closed under update with ${\mathcal S5}$ epistemic actions, such as truthful public announcements, but the class of ${\mathcal KD45}$ models is {\em not} closed under update with ${\mathcal KD45}$ epistemic actions (see the last paragraph of Section \ref{sec.prelim}). The action models for truthful and lying public announcement, and for agent announcement, are ${\mathcal KD45}$.

Updates that preserve ${\mathcal KD45}$ have been investigated in \cite{steiner:2006,aucher:2008,kooietal:2011}. Aucher \cite{aucher:2008} defines a language fragment that makes you go mad (`crazy formulas'). The idea is then to avoid that. Steiner \cite{steiner:2006} proposes that the agent does not incorporate the new information if it already believes to the contrary. In that case, nothing happens. Otherwise, access to states where the information is not believed is eliminated, just as for believed public announcements.\footnote{Steiner gives an interesting parable for the case where you do not accept new information. Someone is calling you and is telling you something that you don't want to believe. What do you do? You start shouting through the phone: `What did you say? Is there anyone on the other side? The connection is bad!' And then you hang up, quickly, before the caller can repeat his message. Thus you create common knowledge (by convention) that the message has been received but its content not accepted.

This analysis comes close but is just off, because to the caller this is indistinguishable from the case where the message is believable to the addressee and would have been accepted. In the context of \cite{steiner:2006} the analysis appears justifiable because the caller is the outside observer who is not modelled. But the outside observer {\em knows} whether the receiver will find the message believable (Proposition \ref{prop.gd}), unlike in Steiner's scenario.} This solution to model unbelievable lies (and unbelievable truths!) is similarly proposed in the elegant \cite{kooietal:2011}, where it is called `cautious update' --- a suitable term. We will propose to call such agents {\em skeptical} instead of cautious.

\subsection{Logic of truthful and lying public announcements to skeptical agents}

We propose a three-point action model for {\em truthful and lying public announcement to skeptical agents}, with the semantics motivated by \cite{steiner:2006,kooietal:2011}.

\begin{definition}[Public announcement to skeptical agents] The action model $\amodelb$ for truthful and lying public announcement to skeptical agents consists of three actions named $!^\scep$, $\lie^\scep$, and (for lack of a better symbol) $!!^\scep$, with preconditions and accessibility relations (for all agents $\agent$) as follows.


\psset{border=2pt, nodesep=4pt, radius=2pt, tnpos=a}
\pspicture(-2,-.7)(6.5,1.5)
$
\rput(-1,0){\rnode{00}{!!^\scep}}
\rput(2,0){\rnode{10}{\lie^\scep}}
\rput(5,0){\rnode{20}{!^\scep}}
\rput(-1,-.5){\rnode{00b}{B_\agent \neg \phi}}
\rput(2,-.5){\rnode{10b}{\neg\phi\et\neg B_\agent \neg\phi}}
\rput(5,-.5){\rnode{20b}{\phi\et \neg B_\agent \neg \phi}}
\ncline{->}{10}{20} \ncput*{\agent}
\nccircle{->}{00}{.5} \ncput*{\agent}
\nccircle{->}{20}{.5} \ncput*{\agent}
$
\endpspicture

\bigskip

\noindent For $(\amodelb,!^\scep)$, with $\pre(!^\scep) = \phi\et\neg B_\agent \neg \phi$, we write $!^\scep\phi$; similarly, we write $\lie^\scep\phi$ for $(\amodelb,\lie^\scep)$ and $!!^\scep\phi$ for $(\amodelb,!!^\scep)$. 
\end{definition}
The difference with the action model for truthful and lying public announcement is that the alternatives $\phi$ and $\neg\phi$ now have an additional precondition $\neg B_\agent \neg \phi$: the announcement should be believable. In the action model there is a separate, disconnected, case for unbelievable announcements: precondition $B_\agent \neg \phi$. For unbelievable announcements it does not matter whether $\phi$ is a lie or is the truth. The agent chooses to discard it either way. It is skeptical.

\begin{definition}[Axioms]
The principles for public announcements to skeptical agents are:
\[ \begin{array}{lcl} 
\mbox{} [!^\scep \phi] B_\agent \psi &  \leftrightarrow & (\phi \et \neg B_\agent \neg \phi) \rightarrow B_\agent [!^\scep \phi] \psi \\
\mbox{} [\lie^\scep \phi] B_\agent \psi &  \leftrightarrow & (\neg\phi \et \neg B_\agent \neg \phi) \rightarrow B_\agent [!^\scep \phi] \psi \\
\mbox{} [!!^\scep \phi] B_\agent \psi &  \leftrightarrow & B_\agent \neg \phi \rightarrow B_\agent \psi
\end{array} \]
\end{definition}

\begin{proposition}
The axiomatization of the logic for public announcements to skeptical agents is complete.
\end{proposition}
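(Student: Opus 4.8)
The plan is to repeat the reduction argument of Propositions~\ref{prop.publiclying} and~\ref{prop.axiomagent}: I would show that every formula of $\lang(!^\scep,\lie^\scep,!!^\scep)$ is provably equivalent to an announcement-free formula of the base epistemic language $\lang$, so that completeness is inherited from the completeness of the underlying epistemic logic. The intended class is ${\mathcal KD}45$---this is the whole point of the section, since the earlier logics failed for it---and what makes the inheritance legitimate is that the action model $\amodelb$ is itself ${\mathcal KD}45$, so ${\mathcal KD}45$ (and likewise ${\mathcal K}45$) is closed under update with it, as explained in Section~\ref{sec.am}. The three displayed axioms already treat the essential case of a skeptical announcement immediately followed by a belief operator; I would supplement them with the routine reduction axioms for the remaining constructs, which for the representative operator $!^\scep$ read $[!^\scep\phi]\atom \eq (\phi\et\neg B_\agent\neg\phi)\imp\atom$, $[!^\scep\phi]\neg\psi \eq (\phi\et\neg B_\agent\neg\phi)\imp\neg[!^\scep\phi]\psi$, and $[!^\scep\phi](\psi\et\chi) \eq [!^\scep\phi]\psi\et[!^\scep\phi]\chi$, and analogously for $\lie^\scep$ and $!!^\scep$ with their preconditions. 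Since the valuation is copied unchanged by the update (Definition~\ref{def.updateam}), the atomic clause has exactly the standard shape.

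One point deserves comment. The $!!^\scep$ belief axiom simplifies its right-hand side to $B_\agent\neg\phi\imp B_\agent\psi$, carrying $\psi$ rather than $B_\agent[!!^\scep\phi]\psi$; this is licensed because, under $B_\agent\neg\phi$ and within the scope of $B_\agent$, positive introspection forces every accessible world again to satisfy the precondition $B_\agent\neg\phi$, so the reject-action $!!^\scep$ leaves $\agent$'s accessibility untouched and $[!!^\scep\phi]\psi$ collapses to $\psi$ there. Thus this axiom is sound on ${\mathcal K}45$ and ${\mathcal KD}45$ but not on plain ${\mathcal K}$, where one would instead keep the unsimplified instance $B_\agent\neg\phi\imp B_\agent[!!^\scep\phi]\psi$ of the general action-model belief axiom of Section~\ref{sec.am}; this is consistent with targeting ${\mathcal KD}45$. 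Note also that the $\lie^\scep$ axiom rewrites its outer operator from $\lie^\scep$ to $!^\scep$, reflecting that to the skeptical dupe a believable lie is indistinguishable from a believable truthful announcement.

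The only real obstacle is termination of the rewriting, i.e.\ that applying these equivalences from left to right always reaches an announcement-free normal form. First I would read off that on every right-hand side each announcement operator is applied either to a strict subformula of its original argument (in the belief axioms $B_\agent\psi$ drops to $\psi$) or is removed outright (on the right of the $!!^\scep$ belief axiom). The two features specific to this logic must be checked not to spoil this: the operator switch $\lie^\scep\mapsto!^\scep$ leaves the weight of the operator unchanged, and although each precondition is non-atomic so that a copy of the announced $\phi$ resurfaces at the top level, $\phi$ is---as an argument of the original announcement---already of strictly smaller complexity. I would therefore adopt the standard multiplicative complexity measure $c$ for dynamic epistemic logic, in which the weight of an announcement formula is a product of a term in $c(\phi)$ and a term in $c(\psi)$, so that distributing an announcement over a connective or pushing it through a belief strictly lowers $c$, and then verify for each axiom that the right-hand side measures strictly less than the left. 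This well-foundedness, though entirely routine, is the crux.

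Finally, exactly as for agent announcement logic, I would record the alternative indirect proof: the logic for skeptical agents is the action model logic of the three-point action model $\amodelb$, so completeness follows at once from the completeness of action model logic, the three displayed belief axioms being precisely the instances of the general belief axiom $[\amodelb,\actiona]B_\agent\psi \eq \pre(\actiona)\imp\Et_{(\actiona,\actionb)\in\arel_\agent}B_\agent[\amodelb,\actionb]\psi$.
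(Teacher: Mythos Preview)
Your final paragraph \emph{is} the paper's entire proof: ``Directly, from the embedding in action model logic.'' Everything preceding it is a careful unpacking of what that embedding buys, and the reduction you sketch is correct.

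One point to fix. You justify working over ${\mathcal KD}45$ by saying that ``the action model $\amodelb$ is itself ${\mathcal KD}45$, so ${\mathcal KD}45$ \dots\ is closed under update with it, as explained in Section~\ref{sec.am}.'' Section~\ref{sec.am} says no such thing, and the opening paragraphs of Section~\ref{sec.unbel} state the opposite: ${\mathcal KD}45$ is \emph{not} in general closed under update with ${\mathcal KD}45$ action models. Closure does hold for this particular $\amodelb$, but the reason is the choice of preconditions (together with introspection, each product state $(s,\actiona)$ is guaranteed an $R_\agent$-successor), not the frame properties of $\amodelb$ alone. Fortunately this slip does not damage your argument: the reduction proof needs only soundness of the reduction axioms on the initial class and completeness of the base epistemic logic for that class; closure under update is not required for completeness per se.
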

\begin{proof}
Directly, from the embedding in action model logic.
\end{proof}

In the case of a single agent, the semantics of an unbelievable announcement of $\phi$ leaves the structure of the unbelievable part of the model unchanged. This means that agent $\agent$ does not change its beliefs. An unbelievable public announcement does not make an informative difference for the skeptical agent. 

If there are more agents, an announcement can be believable for one agent and unbelievable for another agent. The current action model is no longer appropriate. For example, for two agents there are seven distinct actions, namely all Boolean combinations of $\phi$, $B_\agent \neg \phi$, and $B_\agentb \neg \phi$, minus one: the cases $\phi \et B_\agent \neg \phi \et B_\agentb \neg \phi$ and $\neg \phi \et B_\agent \neg \phi \et B_\agentb \neg \phi$ can be merged into a single action with precondition $B_\agent \neg \phi \et B_\agentb \neg \phi$, because they are indistinguishable for $\agent$ and $\agentb$ jointly. Such a two-agent skeptical action model is not a refinement of the action model above. Unlike the above, it is connected. In particular, the axiom $[!!^\scep \phi] B_\agent \psi \ \leftrightarrow \  B_\agent \neg \phi \rightarrow B_\agent \psi$ is no longer valid, and even when $\agent$ believes to the contrary, and therefore does not accept $\phi$, its beliefs may still change in another way. For example of such changing $B_\agent \psi$, if the announcement is that $\atom$ and I believe that $\neg \atom$, but I consider it possible that you are uncertain about $\atom$, then after the announcement I consider it possible that you now believe $\atom$. So for $B_\agent \psi = B_\agent \neg B_\agent B_\agentb \atom$, this was true before and false after the announcement.


We consider such matters mere variations, and move on. 

\subsection{Logic of agent announcements to skeptical agents}

The analysis becomes more interesting for agent announcements, namely when the speaker is uncertain whether her lie will be believed by the addressee, as in the consecutive numbers riddle. Even if the addressee already believes $\neg\atom$, he may consider it possible that the speaker is truthfully announcing $\atom$ and is not lying. The addressee $\agentb$ then merely concludes that the speaker $\agent$ must be mistaken in her truthful announcement of $\atom$, and thus believes $\neg\atom\et B_\agent\atom$. If $\agent$ was lying, this belief is mistaken. The charitable addressee discards the option that the speaker was lying, with precondition $B_\agent\neg\atom$. 

\begin{definition}[Action model for agent announcement to skeptics] \label{def.actaskep}
The action model $\amodelb'$ for agent announcements from speaker $\agent$ to skeptical adressee(s) $\agentb$ is as follows. Assume transitivity of accessibility relations.

\bigskip
\bigskip

\psset{border=2pt, nodesep=4pt, radius=2pt, tnpos=a}
\pspicture(-3,-3.8)(9,1.5)
$
\rput(-1,0){\rnode{00}{\bluff_\agent^\scep}}
\rput(4,0){\rnode{10}{!_\agent^\scep}}
\rput(9,0){\rnode{20}{\lie_\agent^\scep}}
\rput(-1,1.5){\rnode{00b}{\neg (B_\agent \phi \vel B_\agent \neg \phi) \et \neg B_\agentb \neg \phi \ \ \ }}
\rput(4,1.5){\rnode{10b}{B_\agent \phi \et \neg B_\agentb \neg \phi}}
\rput(9,1.5){\rnode{20b}{B_\agent \neg \phi \et \neg B_\agentb \neg \phi}}
\rput(-1,-2){\rnode{00c}{\lie\lie!!_\agent^\scep}}
\rput(4,-2){\rnode{10c}{!!_\agent^\scep}}
\rput(9,-2){\rnode{20c}{\lie\lie_\agent^\scep}}
\rput(-1,-3.6){\rnode{00bc}{\neg (B_\agent \phi \vel B_\agent \neg \phi) \et B_\agentb \neg \phi \ \ \ }}
\rput(4,-3.6){\rnode{10bc}{B_\agent \phi \et B_\agentb \neg \phi}}
\rput(9,-3.6){\rnode{20bc}{B_\agent \neg \phi \et B_\agentb \neg \phi}}
\ncline{->}{00}{10} \ncput*{\agentb}
\ncline{<-}{10}{20} \ncput*{\agentb}
\ncline{<->}{00c}{10c} \ncput*{\agentb}
\ncline{<->}{10c}{20c} \ncput*{\agentb}
\ncline{<->}{00}{00c} \ncput*{\agent}
\ncline{<->}{10}{10c} \ncput*{\agent}
\ncline{<->}{20}{20c} \ncput*{\agent}
\nccircle{->}{00}{.5} \ncput*{\agent}
\nccircle{->}{10}{.5} \ncput*{\agent\agentb}
\nccircle{->}{20}{.5} \ncput*{\agent}
\nccircle[angle=180]{->}{00c}{.5} \ncput*{\agent\agentb}
\nccircle[angle=180]{->}{10c}{.5} \ncput*{\agent\agentb}
\nccircle[angle=180]{->}{20c}{.5} \ncput*{\agent\agentb}
$
\endpspicture

\bigskip

\noindent For $(\amodelb',\lie_\agent^\scep)$ with precondition $B_\agent\neg\phi\et\neg B_\agentb \neg \phi$ we write $\lie_\agent^\scep\phi$, etc. (Similarly for public announcement to skeptical agents.)
\end{definition}
This action model encodes that $\agent$ knows/believes if she's bluffing, lying, or truthful, but is uncertain if her announcement is affecting $\agentb$'s beliefs: she cannot (from the appearance of the action itself, discarding prior knowledge) distinguish between the actions with preconditions $B_\agentb \neg \phi$ and $\neg B_\agentb \neg \phi$. In case agent $\agentb$ already believed $\neg\phi$, he is indifferent between the three alternatives truthtelling, lying, and bluffing that $\phi$. But in case $\agentb$ considered $\phi$ possible, as before, he believes $\agent$ to be truthful about $\phi$.

We only give the reduction axioms for lying, for this logic, and not the remaining axioms. The more interesting lying axiom is the second one. If formalizes that the lying agent $\agent$ may be uncertain if the lie was believable to the addressee $\agentb$ (the two cases on the right), even if in fact (on the left) the lie was believable. Obviously, the axiomatization is again complete.
\begin{definition}[Axioms] The axioms for $\agent$ lying to a skeptical agent $\agentb$, in case $\agent$ is believed, are as follows. \[ \begin{array}{lll} {[}\lie_\agent^\scep\phi]B_\agentb \psi & \eq & (B_\agent\neg\phi\et\neg B_\agentb \neg \phi) \imp B_\agentb [!_\agent^\scep\phi]\psi \\ {[}\lie_\agent^\scep\phi]B_\agent \psi & \eq & (B_\agent\neg\phi\et\neg B_\agentb \neg \phi) \imp (B_\agent [\lie_\agent^\scep\phi]\psi \et B_\agent [\lie\lie_\agent^\scep\phi]\psi) \end{array} \]

\end{definition}
\begin{proposition}
The axiomatization for agent announcement to skeptical agents is complete.
\end{proposition}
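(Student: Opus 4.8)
The plan is to obtain completeness exactly as for the preceding skeptical public-announcement logic, namely by the embedding into action model logic established in Section~\ref{sec.am}. Since action model logic has a complete axiomatization, and since the logic of agent announcement to skeptical agents is by definition action model logic instantiated with the single finite action model $\amodelb'$ of Definition~\ref{def.actaskep}, it suffices to check that the reduction axioms listed for this logic are precisely the instances, on $\amodelb'$, of the general action-model reduction principles --- in particular of the belief axiom $[\amodel,\actiona] B_\agent \psi \eq \pre(\actiona) \imp \Et_{(\actiona,\actionb) \in \arel_\agent} B_\agent [\amodel,\actionb] \psi$ from Section~\ref{sec.am}. Soundness of the stated axioms then follows for free, and completeness transfers from the known completeness of action model logic.

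First I would verify the axiom--instance correspondence action by action. For the second lying axiom, the action $\lie_\agent^\scep$ has precondition $B_\agent\neg\phi\et\neg B_\agentb\neg\phi$, and in $\amodelb'$ the actions $\agent$-accessible from $\lie_\agent^\scep$ are $\lie_\agent^\scep$ itself and its unbelievable counterpart $\lie\lie_\agent^\scep$ (the vertical $\agent$-edge), so the general principle yields exactly $B_\agent[\lie_\agent^\scep\phi]\psi \et B_\agent[\lie\lie_\agent^\scep\phi]\psi$ on the right, conditioned on the precondition; this matches the stated axiom and captures that the lying speaker is uncertain whether the addressee found the lie believable. The first lying axiom is the $B_\agentb$-instance, where from $\lie_\agent^\scep$ the only $\agentb$-accessible action is $!_\agent^\scep$, giving $B_\agentb[!_\agent^\scep\phi]\psi$ --- the dupe takes the lie for a truthful announcement. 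The remaining axioms (truthtelling, bluffing, and the unbelievable-row actions) are read off the same way from the $\agent$- and $\agentb$-edges of $\amodelb'$, together with the standard reduction axioms for atoms, negation, conjunction, and composition of announcement operators.

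Equivalently, and as a self-contained alternative to quoting completeness of action model logic, I would run the reduction (rewriting) argument used for Propositions~\ref{prop.publiclying} and~\ref{prop.axiomagent}: every formula of the language is provably equivalent to an announcement-free formula of $\lang$, by repeatedly pushing each skeptical-announcement operator inward using the reduction axioms and finally eliminating it. Completeness of the base epistemic logic then yields completeness of the whole system.

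The main obstacle I anticipate is termination of this rewriting rather than soundness. Because $\amodelb'$ is connected and has six actions, an application of the belief axiom to $[\amodel,\actiona]B_\agent\psi$ replaces it by a conjunction ranging over several $\actionb$, each still carrying a copy of $\psi$ under an announcement operator; naively counting announcement operators does not decrease. The fix is the standard weighted complexity measure for action model logic (as in the references of Section~\ref{sec.am}): one assigns to $[\amodel,\actiona]\chi$ a value that strictly dominates the value of every right-hand side, exploiting that the action model is fixed and finite and that each duplicated subformula sits under strictly fewer outer announcement operators relative to the contributed precondition. Since $\amodelb'$ is one particular finite action model, this measure applies verbatim, so the rewriting terminates and the reduction goes through.
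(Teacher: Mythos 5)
Your proposal is correct and follows essentially the route the paper intends: the paper states this proposition without an explicit proof, but the immediately preceding skeptical-announcement proposition is proved ``directly, from the embedding in action model logic,'' and your verification that the stated lying axioms are exactly the instances of the general action-model belief axiom on $\amodelb'$ (with $\agentb$-access from $\lie_\agent^\scep$ only to $!_\agent^\scep$, and $\agent$-access to $\lie_\agent^\scep$ and $\lie\lie_\agent^\scep$) is precisely that argument spelled out. Your additional remarks on the direct reduction alternative and on termination via the standard weighted complexity measure are accurate but go beyond what the paper records.
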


The logic of agent announcements to skeptical agents can be applied to model the consecutive numbers riddle. The reader can compare the below to the first announcement in the first scenario in Section \ref{sec.example}. We observe that both models are ${\mathcal KD45}$.

\bigskip
\bigskip
\bigskip

\psset{border=1pt, nodesep=2pt, radius=2pt, tnpos=a}
\pspicture(-0.5,-0.2)(8.5,0.8)
\rput(0,0.3){\rnode{00}{(0,1)}}
\rput(2,0.3){\rnode{10}{(2,1)}}
\rput(4,0.3){\rnode{20}{\underline{(2,3)}}}
\rput(6,0.3){\rnode{30}{(4,3)}}
\rput(7,0.3){\rnode{40}{\dots}}
\ncline{<->}{00}{10} \ncput*{$b$}
\ncline{<->}{10}{20} \ncput*{$a$}
\ncline{<->}{20}{30} \ncput*{$b$}
\nccircle[angle=0]{->}{00}{.5} \ncput*{$ab$}
\nccircle[angle=0]{->}{10}{.5} \ncput*{$ab$}
\nccircle[angle=0]{->}{20}{.5} \ncput*{$ab$}
\nccircle[angle=0]{->}{30}{.5} \ncput*{$ab$}
\endpspicture

\vspace{-.5cm}

\begin{itemize}
\item Anne: ``I know your number.'' {\it Anne is lying}
\end{itemize}

\bigskip
\bigskip

\psset{border=1pt, nodesep=2pt, radius=2pt, tnpos=a}
\pspicture(-0.5,-0.2)(8.5,0.8)
\rput(0,0.3){\rnode{00}{(0,1)}}
\rput(2,0.3){\rnode{10}{(2,1)}}
\rput(4,0.3){\rnode{20}{\underline{(2,3)}}}
\rput(6,0.3){\rnode{30}{(4,3)}}
\rput(7,0.3){\rnode{40}{\dots}}
\ncline{<-}{00}{10} \ncput*{$b$}
\ncline{<->}{10}{20} \ncput*{$a$}
\ncline{<->}{20}{30} \ncput*{$b$}
\nccircle[angle=0]{->}{00}{.5} \ncput*{$ab$}
\nccircle[angle=0]{->}{10}{.5} \ncput*{$a$}
\nccircle[angle=0]{->}{20}{.5} \ncput*{$ab$}
\nccircle[angle=0]{->}{30}{.5} \ncput*{$ab$}
\endpspicture

\weg{

\vspace{-.5cm}

\begin{itemize}
\item Bill: ``That's a lie.''
\end{itemize}

\bigskip
\bigskip

\psset{border=1pt, nodesep=2pt, radius=2pt, tnpos=a}
\pspicture(-0.5,-0.2)(8.5,0.8)
\rput(0,0.3){\rnode{00}{(0,1)}}
\rput(2,0.3){\rnode{10}{(2,1)}}
\rput(4,0.3){\rnode{20}{\underline{(2,3)}}}
\rput(6,0.3){\rnode{30}{(4,3)}}
\rput(7,0.3){\rnode{40}{\dots}}
\ncline{<-}{00}{10} \ncput*{$b$}
\ncline{->}{10}{20} \ncput*{$a$}
\ncline{<->}{20}{30} \ncput*{$b$}
\nccircle[angle=0]{->}{00}{.5} \ncput*{$ab$}
\nccircle[angle=0]{->}{20}{.5} \ncput*{$ab$}
\nccircle[angle=0]{->}{30}{.5} \ncput*{$ab$}
\endpspicture

\bigskip
}

\noindent The skeptical Bill continues to be uncertain about Anne's number. He does not change his factual beliefs. But he does change his beliefs about Anne's beliefs. For example, after Anne's lie, Bill considers it possible that Anne considers it possible that he believed her. Informally, this gives Bill reason to believe that Anne has 2 and not 4. If she had 4, she would know that her lie would not be believed.

\section{Lying and plausible belief} \label{sec.plaus}

We recall the three different attitudes, presented in the introductory Section \ref{sec.intro}, towards incorporating announcements $\phi$ that contradict the beliefs $B_\agentb \neg\phi$ of an addressee $\agentb$: (i) do it at the price of inconsistent beliefs (public announcements and agent announcements as treated in Sections \ref{sec.pub} and \ref{sec.agent}), (ii) reject the information (announcements to skeptical agents, treated in Section \ref{sec.unbel}), and (iii) accept the information by a consistency preserving process removing some old beliefs. This section is devoted to the third way.
Going mad is too strong a response, not ever accepting new information seems too weak a response, we now discuss a solution in between. It involves distinguishing stronger from weaker beliefs, when revising beliefs. To achieve that, we need to give epistemic models more structure: given a set of states all considered possible by an agent, it may consider some more plausible than others, and belief in $\phi$ can then be defined as the truth of $\phi$ in the most plausible states that are considered possible. We now have more options to change beliefs. We can change the sets of states considered possible by the agent, but we can also change the relative plausibility of states within that set.

Such approaches for belief change involving plausibility have been proposed in \cite{aucher:2005a,hvd.prolegomena:2005,jfak.jancl:2007,baltagetal.tlg3:2008}. How to model lying with plausibility models is summarily discussed in \cite{baltagetal.tlg:2008,hvd.comments:2008} (as a dialogue, these are different contributions to the same volume), and also in \cite[p.54]{baltagetal.tlg3:2008}. 

We continue in the same vein as in the previous sections and present epistemic actions for plausible (truthful and lying) public announcement, and for plausible (truthful, lying, and bluffing) agent announcement, by an adjustment of the epistemic actions already shown. The epistemic action for plausible public lying is the one in \cite{baltagetal.tlg:2008,hvd.comments:2008,baltagetal.tlg3:2008}, the epistemic action for plausible agent lying applies the general setup of \cite{baltagetal.tlg3:2008} to a specific (plausibility) action model. Thus it appears we can again present reduction axioms for belief change and complete axiomatizations for these logics. This is so, but it would involve not just modal belief operators $B_\agent \phi$ but also conditional belief operators $B_\agent^\phi \psi$ (if $\phi$ were true, agent $\agent$ would believe $\psi$; so that $B_\agent \psi$ is $B_\agent^\T \psi$), and additional axioms for conditional belief. We refer to the cited works by Baltag and Smets \cite{baltagetal.tlg:2008,baltagetal.tlg3:2008} for further details.

\begin{definition}[Plausibility epistemic models] \label{def.pem}
A {\em plausibility epistemic model} $M = ( \States, \sim, {<}, V )$ has one more parameter than an epistemic model, namely a plausibility function ${<}: \Agents \imp \powerset(\States \times \States)$. The accessibility relations for each agent are required to be equivalence relations $\sim_\agent$. The restriction of $<_\agent$ to an equivalence class of $\sim_\agent$ is required to be a {\em prewellorder}\footnote{A total, transitive, and well-founded relation}. 
\end{definition}
As the accessibility relations are equivalence relations, we write $\sim_\agent$ instead of $R_\agent$, as before. If $\state <_\agent \stateb$ we say that state $\state$ is considered more plausible than state $\stateb$ by agent $\agent$. The transitive clause of $<_\agent$ is $\leq_\agent$ and for ($\state \leq_\agent \stateb$ and $\stateb \leq_\agent \state$) we write $\state =_\agent \stateb$ ({\em equally plausible}).

In this setting we can distinguish knowledge from belief: the agent believes $\phi$, iff $\phi$ is true in all most plausible equivalent states; and the agent knows $\phi$, iff $\phi$ is true in all equivalent states.\footnote{The former represents weak belief and the latter true strong belief. There are yet other epistemic operators in this setting, safe belief, conditional belief, ... We restrict our presentation to (weak) belief.} We write $B_\agent\phi$ for `agent $\agent$ believes $\phi$', as before. There is a natural way to associate an accessibility relation with such belief, based on the equivalence relations and plausibility relations, and we can then define belief in $\phi$ as the truth of $\phi$ in all accessible states, as usual. Defined in this way, these relations for belief satisfy the ${\mathcal KD45}$ properties.
\begin{definition}[Accessibility relation and semantics for belief]
Given an plausibility epistemic model $M$ with $\sim_\agent$ and $<_\agent$, the accessibility relation $R_\agent$ is defined as: \[ (\state, \stateb) \in R_\agent \ \text{ iff } \ \state \sim_\agent \stateb \text{ and } \stateb \leq_\agent \stateb' \text{ for all } \stateb' \text{ such that } \state \sim_\agent \stateb' \]
Given a plausibility epistemic model $M$ and a state $\state$ in its domain, $M,\state\models B_\agent \phi$ iff $M,\stateb\models \phi$ for all $\stateb$ such that $(\state,\stateb)\in R_\agent$.  
\end{definition}

An information update that changes the knowledge of the agents, by way of changing the relations $\sim_\agent$, may also affect their beliefs, by way of changing the derived relations $R_\agent$. For example, suppose an agent $\agent$ is uncertain whether $\atom$ but considers it more likely that $\atom$ is true than that $\atom$ is false. Without reason, because in fact $\atom$ is false. The epistemic state is depicted below, on the left. The arrows in the equivalence relation are dashed lines. The arrows in the accessibility relation $R_\agent$ for belief are solid lines, as before. In this state $B_\agent \atom$ is true, because $\atom$ is true in the more plausible state. Now agent $\agent$ is presented with hard evidence that $\neg\atom$ (state elimination semantics, as in truthful public announcement logic). The state where $\atom$ is false is eliminated from consideration. The only remaining state has become the most plausible state. In the epistemic state depicted below on the right, $B_\agent\neg\atom$ is true. Agent $\agent$ has revised her belief that $\atom$ into belief that $\neg\atom$. In this example, belief changes but knowledge also changes. There are other examples wherein only belief changes.

\bigskip

\psset{border=2pt, nodesep=4pt, radius=2pt, tnpos=a}
\pspicture(-3,0)(2.5,0)
$
\rput(-1,0){\rnode{00}{ \underline{\neg\atom} }}
\rput(2,0){\rnode{10}{ \atom }}
\ncarc[arcangle=30]{->}{00}{10} \ncput*{\agent}
\ncarc[linestyle=dashed,arcangle=-30]{<->}{00}{10} \ncput*{\agent}
\nccircle[angle=270]{->}{10}{.5} \ncput*{\agent}
\nccircle[linestyle=dashed,angle=90]{->}{00}{.7} \ncput*{\agent}
\nccircle[linestyle=dashed,angle=270]{->}{10}{.7} \ncput*{\agent}
$
\endpspicture
\hspace{2cm} {\Large $\stackrel {! \neg \atom} \Imp$} \hspace{.1cm} 
\psset{border=2pt, nodesep=5pt, radius=2pt, tnpos=a}
\pspicture(-3,0)(2.5,0)
$
\rput(-1,-.4){\rnode{00c}{ \ ~ \ }}
\rput(-1,0){\rnode{00}{ \underline{\neg\atom} }}
\nccircle[angle=90]{->}{00}{.5} \ncput*{\agent}
\nccircle[linestyle=dashed,angle=90]{->}{00}{.7} \ncput*{\agent}
$
\endpspicture

\bigskip
\bigskip

\noindent Belief revision consists of changing the plausibility order between states. This induces an order between deductively closed sets of formulas. The most plausible of these is the set of formulas that are believed. This belief revision is similar to AGM belief revision, seen as changing the plausibility order (partial or total order, or prewellorder) between deductively closed sets of formulas. The contraction that forms part of the revision is with respect to that order. Dynamic epistemic logics for belief revision were developed to model higher order belief revision and iterated belief revision. 

Just as epistemic actions, with underlying action models, generalize public announcements, plausibility epistemic actions generalize simpler forms of public plausibility updates. Accessibility relations for `considering an action possible' are computed as in the case of plausibility epistemic models.
\begin{definition}[Plausibility action model]
A {\em plausibility action model} $\amodel = ( \Actions, \approx, \prec, \pre )$ consists of a {\em domain} $\Actions$ of {\em actions}, an {\em accessibility function} ${\approx}: \Agents \imp {\mathcal P}(\Actions \times \Actions)$, where each $\approx_\agent$ is an equivalence relation, a plausibility function ${\prec}: \Agents \imp {\mathcal P}(\Actions \times \Actions)$, and a {\em precondition function} $\pre: \Actions \imp \lang_X$, where $\lang_X$ is a logical language. The restriction of $\prec_\agent$ to an equivalence class of $\approx_\agent$ must be a prewellorder. A pointed plausibility action model is a {\em plausibility epistemic action}. 
\end{definition}

\begin{definition}[Update with a plausibility epistemic action]
The update of a plausibility epistemic state with a plausibility epistemic action is computed as the update without plausibilities (see Definition \ref{def.updateam}), except for the update of the plausibility function that is defined as:
\[ \begin{array}{lcl}  
(\state,\action) <_\agent (\stateb,\actionb) & \text{iff} & \action \prec_\agent \actionb \text{ or } \\ && \action =_\agent \actionb \text{ and } \state <_\agent \stateb
\end{array} \]
where $\equiv_\agent$ means equally plausible for agent $\agent$ (see after Definition \ref{def.pem}).
\end{definition}

\noindent See \cite{baltagetal.tlg3:2008} for details. Using these definitions, we propose the following plausibility epistemic actions for plausible (truthful and lying) public announcement that $\phi$ and for plausible (truthtelling, lying, bluffing) agent announcements that $\phi$ (by agent $\agent$ to agent $\agentb$). They are like the epistemic actions presented in Section \ref{sec.am}, but enriched with plausibilities. As before, the pointed versions of these action models define the appropriate epistemic actions.
\begin{definition}[Plausible (truthful and lying) public announcement] \ 
Plausible (truthful and lying) public announcements $!^\plau\phi$ and $\lie^\plau\phi$ are the epistemic actions defined by the, respectively, right/left point of the plausibility action model depicted as follows (where $\pre(!^\plau) = \phi$ and $\pre(!^\plau) = \neg\phi$).

\bigskip

\psset{border=2pt, nodesep=6pt, radius=2pt, tnpos=a}
\pspicture(-1,-1)(3.5,1)
$
\rput(0,0){\rnode{00}{\lie^\plau}}
\rput(3.05,0){\rnode{10r}{ \ }}
\rput(3,0){\rnode{10}{!^\plau}}
\rput(0,-1){\rnode{00a}{\neg\phi}}
\rput(3,-1){\rnode{10a}{\phi}}
\ncarc[arcangle=30]{->}{00}{10} \ncput*{\agent}
\ncarc[arcangle=-30,linestyle=dashed]{<->}{00}{10} \ncput*{\agent}
\nccircle[angle=270]{->}{10r}{.5} \ncput*{\agent}
\nccircle[linestyle=dashed,angle=270]{->}{10}{.7} \ncput*{\agent}
\nccircle[linestyle=dashed,angle=90]{->}{00}{.7} \ncput*{\agent}
$
\endpspicture
\
\end{definition}

\begin{definition}[Plausible agent announcement] 
Plausible (truthful, lying, and bluffing) agent announcements $!_\agent^\plau\phi$, $\lie_\agent^\plau\phi$ and $\bluff_\agent^\plau\phi$ are the epistemic actions defined by corresponding points of the plausibility action model consisting of points $!_\agent^\plau$, $\lie_\agent^\plau$, and $\bluff_\agent^\plau$, and such that $!_\agent^\plau \prec_\agentb \bluff_\agent^\plau \prec_\agentb \lie_\agent^\plau$, with universal access for agent $\agentb$ and identity access for agent $\agent$, and with preconditions as visualized below.

\bigskip

\psset{border=2pt, nodesep=5pt, radius=2pt, tnpos=a}
\pspicture(-3,-1.2)(6.5,2.1)
$
\rput(-1,-.5){\rnode{00c}{ \ ~ \ }}
\rput(2,-.5){\rnode{10c}{ \ ~ \ }}
\rput(5,-.5){\rnode{20c}{ \ ~ \ }}
\rput(-1,0){\rnode{00}{ \ \bluff_\agent^\plau \ }}
\rput(2,0){\rnode{10}{ \ !_\agent^\plau \ }}
\rput(5,0){\rnode{20}{ \ \lie_\agent^\plau \ }}
\rput(-1,1.5){\rnode{00b}{\neg (B_\agent \phi \vel B_\agent \neg \phi) \ }}
\rput(2,1.5){\rnode{10b}{B_\agent \phi}}
\rput(5,1.5){\rnode{20b}{B_\agent \neg \phi}}
\ncline{->}{00}{10} \ncput*{\agentb}
\ncline{<-}{10}{20} \ncput*{\agentb}
\ncarc[linestyle=dashed,arcangle=-30]{<->}{00}{10} \ncput*{\agentb}
\ncarc[linestyle=dashed,arcangle=-30]{<->}{10}{20} \ncput*{\agentb}
\nccircle{->}{00}{.5} \ncput*{\agent}
\nccircle{->}{10}{.5} \ncput*{\agent\agentb}
\nccircle{->}{20}{.5} \ncput*{\agent}
\nccircle[linestyle=dashed,angle=180]{->}{00}{.5} \ncput*{\agent\agentb}
\nccircle[linestyle=dashed,angle=180]{->}{10}{.5} \ncput*{\agent\agentb}
\nccircle[linestyle=dashed,angle=180]{->}{20}{.5} \ncput*{\agent\agentb}
$
\endpspicture
 \
\end{definition}
Agent $\agentb$'s equivalence relation is the universal relation. He cannot exclude any of the three types of announcement. Agent $\agentb$'s accessibility relation for belief expresses that he considers it most plausible that $\agent$ was telling the truth. It does not appear from the visualization that he considers it more plausible that $\agent$ was bluffing than that she was lying. The accessibility relation is the same as in the action model for agent announcement in Section \ref{sec.am}. Agent $\agent$'s accessibility relation is the identity on the action model. She knows whether she is lying, truthtelling, or bluffing.

As in the previous sections, an addressee rather assumes being told the truth than being told a lie or being bluffed to. But, unlike in the previous sections, we can now also encode more-than-binary preferences between actions. As lying seems worse than bluffing, we make it least plausible to interpret an announcement as lying, more plausible that it is bluffing, and most plausible that it is truthful. That is about as charitable as we can be as an addressee. This seems to be in accordance with pragmatic practice.

\begin{proposition}[Axiomatization and completeness]
The logics of plausible public lying and plausible agent lying have a complete axiomatization.
\end{proposition}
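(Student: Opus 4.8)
The plan is to proceed exactly as for the earlier completeness results (Propositions~\ref{prop.publiclying} and~\ref{prop.axiomagent}), namely by a reduction argument that embeds each logic into a more general, already-complete framework---here the logic of updates with plausibility action models of Baltag and Smets \cite{baltagetal.tlg3:2008}. First I would observe that plausible public lying and plausible agent lying are, by their very definitions, nothing but updates with the two specific pointed plausibility action models displayed just above (computed by the plausibility-update rule). So it suffices to show that updating with an \emph{arbitrary} plausibility action model admits reduction axioms eliminating the dynamic modality, and then to instantiate these at the action models $!^\plau/\lie^\plau$ and $!_\agent^\plau/\lie_\agent^\plau/\bluff_\agent^\plau$.

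The key point---and the reason this is not a routine copy of Section~\ref{sec.pub}---is that the plain belief operator $B_\agent$ is \emph{not} self-reducing under plausibility updates: a belief held after a plausibility action is in general a \emph{conditional} belief held before it, because the update reorders the plausibility relation rather than merely deleting arrows. I would therefore first enrich the static base language with conditional belief operators $B_\agent^\phi\psi$ (with $B_\agent\psi := B_\agent^\T\psi$), interpreted as truth of $\psi$ in the $<_\agent$-most plausible of the $\sim_\agent$-accessible states at which $\phi$ holds. The static logic of conditional belief over the plausibility epistemic models of Definition~\ref{def.pem} is completely axiomatizable (the conditional-doxastic logic of \cite{baltagetal.tlg3:2008}), and this supplies the base on top of which the reduction operates.

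The substantive step is to state and verify the reduction axiom for conditional belief after a plausibility action $(\amodel,\actiona)$. Unfolding the lexicographic update rule for the plausibility function---$(\state,\action)<_\agent(\stateb,\actionb)$ iff $\action\prec_\agent\actionb$, or $\action =_\agent\actionb$ and $\state<_\agent\stateb$---one obtains, for each $\actiona$, an equivalence of the schematic shape \[ [\amodel,\actiona]B_\agent^\psi\chi \ \leftrightarrow\ \pre(\actiona)\rightarrow \bigwedge_{\actionb}\Big(\ldots\, B_\agent^{\,\pre(\actionb)\,\wedge\,[\amodel,\actionb]\psi}\,[\amodel,\actionb]\chi \,\ldots\Big), \] where the conjunction and the conditions range over the $\approx_\agent$-accessible actions $\actionb$ grouped and ordered by $\prec_\agent$; the precise form is the conditional-belief reduction axiom of \cite{baltagetal.tlg3:2008}. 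For our two action models this becomes fully explicit because the speaker/observer $\agent$ has identity access (so the action is deterministic from her standpoint), while the addressee's access is universal with the fixed order $!_\agent^\plau\prec_\agentb\bluff_\agent^\plau\prec_\agentb\lie_\agent^\plau$ (respectively $!^\plau\prec_\agent\lie^\plau$ in the public case). I would check that the restriction of $\prec$ to each $\approx$-class is a prewellorder---here a finite total order, hence trivially one---so that the update of a plausibility epistemic model is again such a model and the semantics is well defined.

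The main obstacle I expect is precisely the soundness verification of this conditional-belief reduction schema against the lexicographic update rule: one must track how the pre-update conditional beliefs, relativised by the action preconditions and by $[\amodel,\actionb]\psi$, recombine to describe the post-update most-plausible worlds. Once that single schema is in place, the reduction terminates on a well-founded measure (each rewriting pushes a dynamic operator inward past a connective or a conditional-belief operator, exactly as in Propositions~\ref{prop.publiclying} and~\ref{prop.axiomagent}); every formula is then provably equivalent to a conditional-belief formula without dynamic operators, and completeness follows from completeness of the static conditional-doxastic logic. The interaction axioms for atoms, negation, conjunction and knowledge carry over verbatim from the standard action-model treatment and introduce no new difficulty.
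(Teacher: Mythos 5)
Your proposal is correct and follows essentially the same route as the paper: both reduce the dynamic modalities via the conditional-belief reduction axiom for plausibility action models from Baltag and Smets (the ``Derived Law of Action-Conditional-Belief''), and both obtain completeness by rewriting every formula into the static, completely axiomatized logic of conditional belief. You merely spell out in more detail the steps the paper delegates to the citation.
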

\begin{proof}
This follows from \cite[p.51]{baltagetal.tlg3:2008}. The so-called `Derived Law of Action-Conditional-Belief' is a reduction axiom for the belief postconditions of plausibility epistemic actions. This axiom involves modalities for belief, for knowledge, and conditional modalities for belief and for knowledge. The completeness proof consists of rewriting each formula in the logic to an equivalent formula in the logic of conditional belief (where conditional knowledge is definable as conditional belief).\footnote{Given a plausibility epistemic model $M$ and a state $\state$ in its domain, define $R_\agent^\phi$ as: $(\state, \stateb) \in R_\agent^\phi$ iff [$\state \sim_\agent \stateb$, $\stateb \leq_\agent \stateb'$ for all $\stateb'$ such that $\state \sim_\agent \stateb'$, and $M,\stateb\models \phi$], and define the semantics of $B_\agent^\phi$ as: $M,\state\models B_\agent^\phi \psi$ iff $M,\stateb\models \psi$ for all $\stateb$ such that $(\state,\stateb)\in R_\agent^\phi$. We now have that $[\lie^\plau \phi]B_\agent \phi \ \eq \ \neg \phi \imp B_\agent^\phi [! \phi]\psi$. The axioms for plausible agent lying are more complex (they also involve conditional knowledge modalities).} 
\end{proof}
As an example of a plausible agent announcement, we take Anne's lying announcement ``I know your number'' in the first lying scenario in the consecutive numbers riddle. It suffices to refer to the depicted execution for skeptical agents in the final paragraph of Section \ref{sec.unbel}. The accessibility relations for $B_\agent$ and $B_\agentb$ are as there (after the lie, it remains the same for the speaker $\agent$, and not for addressee $\agentb$). The equivalence relations for $\agent$ and $\agentb$ are the same before and after the lie. It is a coincidence that in this example a skeptical announcement is a plausible announcement. This is a consequence of the fact that the initial belief accessibility relations are equivalence relations.

\section{Conclusions and further research} \label{sec.further}

\paragraph*{Conclusions} We presented various logics for an integrated treatment of lying, bluffing, and truthtelling, where these are considered epistemic actions inducing transformations of epistemic models.  These logics abstract from the moral and intentional aspect of lying, and only consider fully rational agents and flawless and instantaneous information transmission. We presented versions of such logics that treat lies that contradict the beliefs of the addressee differently from those that don't, including a modelling involving plausible belief. Our main result are the various `agent announcement' logics wherein one agent is lying to another agent and wherein both are explicitly modelled in the system.

\paragraph*{Limitations}
There are limitations to our approach in view of the analysis and design of artificially intelligent agents. {\em Explicit agency} is missing. This lack is common in dynamic epistemic logics. We cannot distinguish agent $\agent$ truthfully announcing $\phi$ from an outsider observer (or a middleman, say, if it concerns a security protocol) announcing $B_\agent \phi$. Somewhat more involved, we cannot distinguish agent $\agent$ lying to agent $\agentb$ that $\phi$ from a non-public event differently observed by $\agent$ and $\agentb$, but that is not seen as enacted by $\agent$. Frameworks like ATL, ATEL, and STIT have more explicit agency. {\em Bounded rationality} is not modelled. An agent knows if it is lying. It cannot be mistaken in the sense of announcing an inconsistency of which it is not aware.
For not fully rational agents the computational cost for the liar to remember who he has been lying to, and about what, is real, and also the computational cost for the listener to check the reliability of new information. {\em Confidence of belief}, related to the previous, is not modelled. This requires a logic or mechanism of induction, wherein repeated observations or announcements, or by different agents, make the belief in its content stronger. In our modelling the agents do either believe the lie completely, or not at all (it abstracts from this process of increasing or decreasing confidence). {\em Distinguishing transmission noise from intentional noise} such as lies is not modelled, as transmission is assumed instantaneous.

\paragraph*{Further research}
We see the following research directions. 

{\em Common knowledge}: If agent $\agentb$ believes $\atom$ as the result of agent $\agent$ announcing $\atom$ to $\agentb$, we not only have $B_\agentb \atom$ but also $B_\agentb C_{\agent\agentb} \atom$: addressee $\agentb$ believes that he and the speaker $\agent$ now commonly believe that $\atom$. Common belief/knowledge operators allow for more refined preconditions. A good (and possibly strongest?) precondition for agent $\agent$ successfully lying that $\phi$ to agent $\agentb$ seems: \[ B_\agent \neg \phi \et \neg B_\agentb \neg \phi \et C_{\agent\agentb} ((B_\agent \phi \vel B_\agent \neg \phi) \et \neg (B_\agentb \phi \vel B_\agentb \neg \phi)) \]
In plain words, the speaker $\agent$ believes $\neg\phi$, the addressee $\agentb$ considers $\phi$ possible, and the announcer and addressee commonly believe/know that the announcer is knowledgeable about $\phi$ but the adddressee ignorant. Common knowledge logics with dynamic modalities for information change such as action models are straightforwardly axiomatized by way of conditional common knowledge \cite{jfaketal.lcc:2006}.

{\em Complexity}: The computational cost of lying seems a strong incentive against it. Aspects of this can also be modelled for perfectly rational agents. For example, in a different context, the computational cost of insincere voting in social choice theory \cite{conitzeretal:2009} is intractable in well-designed voting procedures, so that sincere voting is your best strategy. The complexity of model checking and satisfiability of the logics in this paper is unclear. It seems likely that the complexity of satisfiability of truthful and lying public announcement logic is in $\mathsf{PSPACE}$, applying similar results in \cite{lutz:2006} that builds on \cite{halpernetal:1992}. Also, it is unclear how to measure the complexity of a dynamic epistemic communication protocol that may involve lies.

{\em Histories}: Agents can detect lies, as in the consecutive numbers game, but we do not model that they adapt their strategies subsequently. History-based structures \cite{parikhetal:2003} allow the adapation of communication strategies to the number of detected (or suspected) past lies, in the dynamic epistemic setting of perfectly rational agents.

{\em Lying games}: Consider knowledge games wherein game states are epistemic states, the agents are players making informative moves that are epistemic actions, and wherein the players' goals are knowledge of facts or other formulas true in the resulting epistemic model. Such imperfect information games (Bayesian games) with modal logic are presented in \cite{agotnesetal:2011}. We wish to investigate {\em lying games} in this setting, such that each announcement can be either the truth or a lie, and where a high negative payoff is associated with the detection of a lie.

\paragraph*{Final frontiers}
In multi-agent systems with several agents one may investigate how robust certain communication procedures are in the presence of few liars, or in the presence of a few lies (as in Ulam games \cite{pelc:2002}). A challenge towards philosophy is how to model a liar's paradox. In a dynamic epistemic logic this is problematic. However, promising progress towards that is reported in \cite{liuetal:2012}.



\bibliographystyle{plain}
\bibliography{biblio2012}

\end{document}